\documentclass[10pt,letterpaper]{article}

\usepackage[
  shownumpages,
  citingstyle=authoryear,
  bibliostyle=plainnat,
  bibfile=refs,
  boxrule=0.7pt
]{brainlab}
\usepackage{placeins}

\usepackage{titlesec}
\titlespacing*{\paragraph}{0pt}{0pt}{0.5em}

\hypersetup{
  urlcolor=braincolor,
  pdftitle={Monarch--Muon: Structure-Aware Muon for Fast and Memory-Efficient Pretraining},
  pdfauthor={Alexander Miasnikov, Kabirov Ruslan, Gleb Molodtsov, Dmitrii Vlashchuk, Vladimir Aletov, Nail Bashirov, Aleksandr Shestakov, Daniil Tikhonov, Maxim Krutikov, Aleksandr Beznosikov}
}

\graphicspath{{figures/}}
\providecommand{\bdiag}{\operatorname{bdiag}}
\providecommand{\op}{\mathrm{op}}

\newcounter{algorithmline}
\newcommand{\mmline}{%
  \refstepcounter{algorithmline}%
  \item[\scriptsize\thealgorithmline:]%
}

\tcbset{brainstyle/.append style={breakable}}

\makeatletter
\@addtoreset{theorem}{section}

\let\c@lemma\c@theorem
\let\c@corollary\c@theorem
\let\c@assumption\c@theorem

\makeatother

\renewcommand{\brainprintbibliography}{%
  \clearpage
  \begingroup
  \small
  \setlength{\bibsep}{2pt plus 1pt}
  \bibliographystyle{plainnat}
  \bibliography{refs}
  \endgroup
}

\setbrainmeta{
  title={Monarch--Muon: Structure-Aware Muon for Fast and Memory-Efficient Pretraining},
  authors={%
    \normalfont\mdseries\normalsize
    Alexander Miasnikov\textsuperscript{1},
    Kabirov Ruslan\textsuperscript{1},
    Gleb Molodtsov\textsuperscript{1},
    Dmitrii Vlashchuk\textsuperscript{1},
    Vladimir Aletov\textsuperscript{1},
    Nail Bashirov\textsuperscript{1},
    Daniil Tikhonov\textsuperscript{1,2},
    Maxim Krutikov\textsuperscript{1},
    Aleksandr Shestakov\textsuperscript{1,3,*},
    Aleksandr Beznosikov\textsuperscript{1}
  },
  affiliations={%
    {\small
      \textsuperscript{1}BRAIn Lab \quad
      \textsuperscript{2}HSE University \quad
      \textsuperscript{3}MBZUAI\\
      \textsuperscript{*}Work performed at BRAIn Lab prior to joining MBZUAI.
    }
  },
  code = \url{https://github.com/brain-lab-research/monarch-muon},
  abstract={Training large language models is constrained by both wall-clock time and memory footprint, and recent matrix-aware optimizers sharpen this tradeoff: Muon can improve optimization but its dense polar update remains expensive. We propose Monarch--Muon, a structured matrix-aware optimizer for weights parameterized by Monarch factors. The resulting structure reduces weights, gradients, and optimizer state, and exposes small independent update subproblems in place of a single dense orthogonalization; we formalize this through a blockwise trust-region interpretation. In quality runs, Monarch--Muon improves over Monarch--AdamW at fixed structure: 3.06 vs. 3.09 in the dense 257M model and 2.68 vs. 2.77 in the 1B MoE model; the two-block MoE variant also matches dense AdamW while using 15\% fewer active and 25.4\% fewer total parameters. In 6.89B-parameter systems measurements, Monarch--Muon reduces peak memory from 68.9 to 52.9~GB, Muon optimizer state from 14.60 to 6.77~GB, and the Muon optimizer step from 1288 to 304~ms.}
}

\begin{document}
\begin{mainpart}

\section{Introduction}
LLM pretraining is a computationally expensive stage of language-model development, where optimization quality, training time, and memory requirements jointly determine the models that can be trained within a given budget.  The optimizer affects all three through its update rule, computational cost, and state storage. Therefore, improving optimization must be accompanied by efficient use of memory and hardware.

AdamW~\citep{loshchilov2019decoupledweightdecayregularization} has long been the standard optimizer for large language models. Its coordinatewise updates are inexpensive and straightforward to shard across devices, but its first- and second-moment buffers account for a substantial share of training memory. GaLore~\citep{zhao2024galorememoryefficientllmtraining}, APOLLO~\citep{zhu2025apollosgdlikememoryadamwlevel}, and FRUGAL~\citep{zmushko2025frugalmemoryefficientoptimizationreducing} reduce this storage through compressed statistics or alternative update rules. Such savings can come at the cost of expensive projection operations or slower convergence, while leaving dense weights and gradients unchanged.

Muon~\citep{jordan2024muon} improves convergence by making an update geometry suited to matrix parameters, while also reducing optimizer state. It replaces coordinatewise adaptation with updates that approximately follow the polar factor of the momentum matrix, corresponding to spectral-norm steepest descent~\citep{bernstein2024oldoptimizernewnorm}. The main cost of Muon is orthogonalization, which involves expensive matrix multiplications and, in sharded implementations, additional communication. Recent methods accelerate polar computation or modify the update and its distributed implementation~\citep{amsel2026polarexpressoptimalmatrix, khaled2025muonbpfastermuonblockperiodic, ahn2025diondistributed, chen2026dmuonefficientdistributedmuon}. While these advances make matrix-aware updates cheaper, they retain dense weights and gradients.

The parameterization itself offers another route to efficiency. Reducing the number of trainable parameters reduces weights, gradients, and optimizer buffers together. For Muon, a suitable structure can also replace large orthogonalization problems with smaller ones. The important question is whether this computationally cheaper representation retains the optimization benefit of matrix-aware updates.

\begin{figure}[!htbp]
\centering
\includegraphics[width=0.9\textwidth]{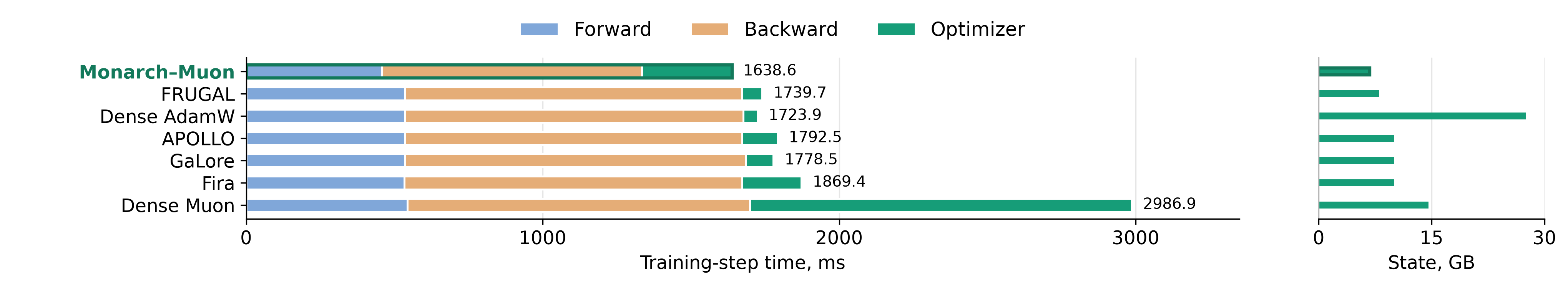}
\vspace{-2mm}
\caption{Training-step breakdown and optimizer-state memory for the 6.89B dense reference at microbatch 4 and four accumulation steps, from the same runs as Table~\ref{tab:memory-time}. Bars show median phase timings; labels give median full-step wall time. Periodic projection refreshes are excluded. Monarch--Muon uses four blocks and reduces both optimizer-state memory and optimizer-step overhead.}
\label{fig:step-time-breakdown}
\end{figure}

We introduce Monarch--Muon, which parameterizes selected weights with Monarch factors~\citep{dao2022monarchexpressivestructuredmatrices} and applies Muon directly to their trainable blocks. This representation uses fewer parameters while retaining the ability to express full-rank transformations, reducing storage for weights, gradients, and optimizer state. The smaller blocks also make Muon's orthogonalization cheaper. In dense and mixture-of-experts pretraining, Muon improves validation loss over AdamW at fixed Monarch structure. Systems measurements show that the computational savings grow with matrix size, making the approach particularly effective at larger scales.

\section{Related Work}

\subsection{Matrix-aware optimization}
Muon~\citep{jordan2024muon} demonstrates that changing the update geometry can improve language-model training without maintaining coordinatewise second moments. Its polar update corresponds to spectral-norm steepest descent~\citep{bernstein2024oldoptimizernewnorm}. However, computing spectral updates remains costly for large matrices. The Polar Express~\citep{amsel2026polarexpressoptimalmatrix} and Gram Newton--Schulz~\citep{GramNewtonSchulz} reduce the cost of polar approximation. NorMuon~\citep{li2025normuonmakingmuonefficient} further improves optimization by combining orthogonalization with neuron-wise scaling.

Sharded training introduces another obstacle: dense orthogonalization can require communication beyond the local parameter shard. Dion~\citep{ahn2025diondistributed} addresses this through low-rank orthonormalized updates with error feedback. MuonBP~\citep{khaled2025muonbpfastermuonblockperiodic} instead orthogonalizes shards independently and periodically performs full orthogonalization. DMuon~\citep{chen2026dmuonefficientdistributedmuon} improves distributed execution, while Dion3~\citep{amsel2026dion3fullstackorthogonalupdates} combines efficient kernels, megabatching, and partial orthogonalization. Their trainable weights remain dense, leaving an opportunity to reduce storage and computation through the parameterization itself.

\subsection{Memory-efficient optimizer state}

A complementary line reduces the state associated with dense parameters. GaLore~\citep{zhao2024galorememoryefficientllmtraining} stores optimizer statistics in low-rank gradient subspaces, substantially reducing memory. However, projection maintenance adds computation, and restricting updates to these subspaces can discard useful gradient directions. APOLLO~\citep{zhu2025apollosgdlikememoryadamwlevel} uses random projections to approximate adaptive scaling without costly SVD computations. Fira~\citep{chen2025firaachievefullranktraining} extends low-rank adaptive scaling to full-rank gradients, while FRUGAL~\citep{zmushko2025frugalmemoryefficientoptimizationreducing} combines stateful updates in selected directions with state-free updates in the remainder. These advances improve the memory--quality tradeoff of optimizer-state compression, but retain dense weights and gradients. 

\subsection{Structured parameterizations}

Monarch~\citep{dao2022monarchexpressivestructuredmatrices} provides such a representation through products of block-diagonal factors and permutations. It supports full-rank transformations with fewer parameters and demonstrates efficient training of structured models. Compute Better Spent~\citep{qiu2024compute} extends the study of structured layers, including Monarch and block tensor-train matrices, and shows that their performance depends strongly on suitable initialization and learning-rate scaling. Thus, choosing an efficient representation also raises a question: how should its factors be updated?

Factorized adaptation methods address a related question for pretrained models. LoRA~\citep{hu2022lora} reduces trainable parameters through low-rank updates, while LoRA-Muon~\citep{cesista2026lora} develops spectral steepest descent for that low-rank setting. Monarch--Muon targets pretraining with full-rank-capable structured weights. It applies matrix-aware updates directly to the Monarch blocks, using the same structure to reduce training storage and localize orthogonalization. Figure~\ref{fig:step-time-breakdown} illustrates the resulting savings in training-step time and optimizer-state memory at the largest benchmark scale.

\paragraph{Contributions.}
We make the following contributions:
\vspace{-2mm}
\begin{itemize}[nosep]
\item \textbf{Monarch--Muon optimizer.} We introduce Monarch--Muon, a matrix-aware optimizer built on the Monarch parameterization, which reduces weight, gradient, and optimizer-state storage and replaces dense orthogonalization with smaller independent computations.

\item \textbf{Theoretical analysis.} We derive a blockwise linear-minimization decomposition and prove convergence of exact-polar Monarch--Muon under generalized smoothness.

\item \textbf{Experimental validation.} Monarch--Muon improves validation loss over Monarch--AdamW in dense and MoE pretraining, matching dense AdamW in MoE with 15\% fewer active and 25.4\% fewer total parameters. Large-scale benchmarks show 54\% lower optimizer-state memory and 45\% shorter training steps than dense Muon.

\item \textbf{Initialization analysis.} To our knowledge, we provide the first systematic study of dense-to-Monarch initialization for SFT, showing that activation-aware fitting and orthogonal rotations improve post-SFT quality over closed-form weight projection under both AdamW and Muon.
\end{itemize}

\section{Monarch--Muon}
\label{sec:monarch-muon}

Monarch--Muon applies matrix-aware updates to the trainable factors of structured weights. Its block structure reduces training storage and replaces dense orthogonalization with independent polar computations on smaller matrices.

\subsection{Algorithm}
\label{sec:monarch-muon-algorithm}

For a selected weight $W\in\mathbb R^{d_{\mathrm{out}}\times d_{\mathrm{in}}}$, we use the Monarch parameterization~\citep{dao2022monarchexpressivestructuredmatrices}
\begin{equation}
\label{eq:monarch-parameterization}
\begin{aligned}
\widehat W(\theta)&=P_1L(\theta)P_2R(\theta),\\
L(\theta)&=\operatorname{bdiag}(X_1,\ldots,X_m),\\
R(\theta)&=\operatorname{bdiag}(X_{m+1},\ldots,X_{2m}).
\end{aligned}
\end{equation}
Here $P_1,P_2$ are fixed, dimension-compatible perfect-shuffle permutations, and $\theta=(X_1,\ldots,X_K)$ contains the $K=2m$ trainable blocks of both factors. The forward pass applies $R$, $P_2$, $L$, and $P_1$ successively, without assembling a dense weight. For clarity, Algorithm~\ref{alg:monarch-muon} describes one converted weight. The same update is applied to every selected projection.

Each block has its own momentum buffer, initialized to zero (Line~\ref{line:mm-init}). A minibatch loss is evaluated through the structured weight, and backpropagation computes the gradients of both factors at the same current parameters (Lines~\ref{line:mm-sample}--\ref{line:mm-backward}). Thus, the factor updates use a common forward and backward pass.

The gradient of each block is accumulated into its momentum (Line~\ref{line:mm-momentum}). We then approximate the polar factor of this momentum using $s$ Newton--Schulz iterations (Line~\ref{line:mm-polar}) and update the block (Line~\ref{line:mm-update}). For a compact singular value decomposition $M_j=U\Sigma V^\top$ containing the positive singular values, the exact direction is $\operatorname{Polar}(M_j)=UV^\top$. The operator $\operatorname{NS}_s$ computes a polynomial approximation to this direction after normalizing the momentum matrix.

Once the gradients are available, the momentum and polar computations are independent across blocks. Blocks of equal shape can therefore be processed in batched matrix multiplications. Parameters outside the Monarch factors use AdamW, and the training schedules and practical Muon variants are specified in Appendix~\ref{app:setup-pretrain}.

\begin{algorithm}{\linewidth}{Blockwise Monarch--Muon}
\label{alg:monarch-muon}
\begin{list}{}{\setlength{\leftmargin}{1.8em}\setlength{\labelwidth}{1.4em}\setlength{\labelsep}{0.4em}\setlength{\itemsep}{1pt}\setlength{\parsep}{0pt}}
\setcounter{algorithmline}{0}
\item[] \textbf{Input:} Initial blocks $\theta^0=(X_1^0,\ldots,X_K^0)$, $K=2m$; fixed permutations $P_1,P_2$; steps $T$
\item[] Settings: step sizes $\eta_k$, momentum coefficients $\beta_k$, Newton--Schulz iterations $s$
\mmline $M_j^{-1}\gets0$ for $j=1,\ldots,K$ \label{line:mm-init}
\mmline \textbf{for} $k=0,\ldots,T-1$ \textbf{do}
    \mmline \hspace*{1em}Sample a minibatch $\xi^k$ \label{line:mm-sample}
    \mmline \hspace*{1em}$\ell_k\gets f(\widehat W(\theta^k);\xi^k)$ \label{line:mm-forward}
    \mmline \hspace*{1em}$(G_1^k,\ldots,G_K^k)\gets\nabla_\theta\ell_k$ \label{line:mm-backward}
    \mmline \hspace*{1em}\textbf{for all} $j=1,\ldots,K$ in parallel \textbf{do}
        \mmline \hspace*{2em}$M_j^k\gets\beta_kM_j^{k-1}+(1-\beta_k)G_j^k$ \label{line:mm-momentum}
        \mmline \hspace*{2em}$U_j^k\gets\operatorname{NS}_s(M_j^k)$ \label{line:mm-polar}
        \mmline \hspace*{2em}$X_j^{k+1}\gets X_j^k-\eta_kU_j^k$ \label{line:mm-update}
    \mmline \hspace*{1em}\textbf{end for}
\mmline \textbf{end for}
\mmline \textbf{return} trained factors $L(\theta^T),R(\theta^T)$
\end{list}
\end{algorithm}

\subsection{Theoretical Analysis}
\label{sec:theory}

\paragraph{Representation and computational cost.}
Consider the square case $\widehat W=PLP^\top R$, where each factor has $m$ blocks of size $d/m\times d/m$. If the blocks are nonsingular,
$$
\det(\widehat W)=\prod_{i=1}^{m}\det(L_i)\det(R_i)\neq0.
$$
Hence the parameterization can represent full-rank weights with $2d^2/m$ parameters. For $m>2$, this is fewer than the $d^2$ parameters of a dense weight. Compression restricts the matrix structure without imposing a low-rank constraint.

The same parameter reduction applies to gradients and block momentum. For rectangular weights, writing $h=\min(d_{\mathrm{in}},d_{\mathrm{out}})$ and $D=\max(d_{\mathrm{in}},d_{\mathrm{out}})$, the factor storage is $(h^2+hD)/m$ parameters. Orthogonalization benefits from the smaller matrix dimensions. A Newton--Schulz iteration costs $O(d^3)$ for a dense square matrix, while the $2m$ factor blocks require
$$
2m\,O\!\left((d/m)^3\right)=O\!\left(2d^3/m^2\right).
$$
Thus, increasing the block count reduces both storage and the arithmetic cost of the polar computation.

\paragraph{Blockwise update geometry.}
For a tuple $H=(H_1,\ldots,H_K)$, define the block norm and its dual
\begin{equation}
\label{eq:lmo-block-geometry}
\begin{aligned}
\|H\|_{\mathrm b}&=\max_j\|H_j\|_{\op},\qquad
\|G\|_{\mathrm b,*}=\sum_j\|G_j\|_*.
\end{aligned}
\end{equation}
Here $\|\cdot\|_{\op}$ and $\|\cdot\|_*$ are the spectral and nuclear norms, respectively, and $\langle G,H\rangle=\sum_j\langle G_j,H_j\rangle_F$ is the sum of Frobenius inner products. A radius-$\eta$ linear minimization oracle in this factor space separates as
\begin{equation}
\label{eq:blockwise-muon-decomposition}
\begin{aligned}
\min_{\|\Delta\|_{\mathrm b}\le\eta}\langle M,\Delta\rangle
&=\sum_{j=1}^{K}\min_{\|\Delta_j\|_{\op}\le\eta}\langle M_j,\Delta_j\rangle_F=-\eta\sum_{j=1}^{K}\|M_j\|_*.
\end{aligned}
\end{equation}
An exact minimizer is $\Delta_j=-\eta\operatorname{Polar}(M_j)$. This gives the blockwise Muon update: each direction solves a spectral-norm subproblem, and all blocks share the same step radius. The decomposition applies jointly to the blocks of $L$ and $R$, even though their gradients depend on both factors. Lemma~\ref{lem:blockwise-muon-step} in Appendix~\ref{app:theory-proofs} proves the decomposition.

\paragraph{Convergence.}
\label{sec:lmo-convergence}
We establish convergence of the exact-polar update for the coupled factor objective $F(\theta)=\mathbb E_\xi f(\widehat W(\theta);\xi)$.

\begin{theorem}[Convergence of exact-polar Monarch--Muon]
\label{thm:monarch-muon-main}
Let $F$ be continuously differentiable and bounded below by $F_{\inf}$, and let $\theta^0$ be deterministic. Suppose that $F$ satisfies generalized smoothness~\eqref{eq:lmo-smoothness} with constants $\mathcal L_0,\mathcal L_1\ge0$, and the stochastic gradients satisfy conditional unbiasedness and bounded conditional Frobenius variance~\eqref{eq:lmo-noise}. Run Algorithm~\ref{alg:monarch-muon} with exact polar factors and schedules
$$
\beta_k=1-(k+1)^{-1/2},\qquad \eta_k=\eta_0(k+1)^{-3/4},
$$
where $\eta_0=(12\mathcal L_1)^{-1}$ if $\mathcal L_1>0$, and $\eta_0$ is any fixed positive value if $\mathcal L_1=0$. Then, for every $T\ge1$,
\begin{equation}
\label{eq:lmo-convergence-bound}
\min_{0\le k<T}\mathbb E\!\left[\sum_{j=1}^{K}\|\nabla_{X_j}F(\theta^k)\|_*\right]=O\!\left(\frac{1+\log T}{T^{1/4}}\right).
\end{equation}
The implicit constant is independent of $T$.
\end{theorem}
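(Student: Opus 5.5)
The proof follows the standard analysis of normalized / LMO-based methods with momentum under $(\mathcal L_0,\mathcal L_1)$-smoothness, carried out in the block geometry $\|\cdot\|_{\mathrm b}$ with dual $\|\cdot\|_{\mathrm b,*}$. I will take the smoothness condition~\eqref{eq:lmo-smoothness} in the form $\|\nabla F(\theta)-\nabla F(\theta')\|_{\mathrm b,*}\le(\mathcal L_0+\mathcal L_1\|\nabla F(\theta)\|_{\mathrm b,*})\|\theta-\theta'\|_{\mathrm b}$ for $\|\theta-\theta'\|_{\mathrm b}$ small. Since every step satisfies $\|\theta^{k+1}-\theta^k\|_{\mathrm b}=\eta_k\le\eta_0$, this gives the descent inequality
\[
F(\theta^{k+1})\le F(\theta^k)+\langle\nabla F(\theta^k),\Delta^k\rangle+\tfrac12(\mathcal L_0+\mathcal L_1\|\nabla F(\theta^k)\|_{\mathrm b,*})\eta_k^2 .
\]

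\textbf{Step 1 (one-step descent).} By Lemma~\ref{lem:blockwise-muon-step}, $\Delta^k=-\eta_k(\operatorname{Polar}(M_j^k))_j$ satisfies $\langle M^k,\Delta^k\rangle=-\eta_k\|M^k\|_{\mathrm b,*}$. Write $\epsilon^k=M^k-\nabla F(\theta^k)$. Then
\[
\langle\nabla F,\Delta^k\rangle\le-\eta_k\|M^k\|_{\mathrm b,*}+\eta_k\|\epsilon^k\|_{\mathrm b,*}\le-\eta_k\|\nabla F(\theta^k)\|_{\mathrm b,*}+2\eta_k\|\epsilon^k\|_{\mathrm b,*}.
\]
The factor $\eta_0\le(12\mathcal L_1)^{-1}$ gives $\tfrac12\mathcal L_1\eta_k^2\|\nabla F\|_{\mathrm b,*}\le\tfrac{1}{24}\eta_k\|\nabla F\|_{\mathrm b,*}$, which can be absorbed. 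The result is
\[
\tfrac{1}{2}\eta_k\|\nabla F(\theta^k)\|_{\mathrm b,*}\le F(\theta^k)-F(\theta^{k+1})+2\eta_k\|\epsilon^k\|_{\mathrm b,*}+\tfrac12\mathcal L_0\eta_k^2 .
\]

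\textbf{Step 2 (momentum error).} Unroll
\[
\epsilon^k=\beta_k\epsilon^{k-1}+\beta_k\bigl(\nabla F(\theta^{k-1})-\nabla F(\theta^k)\bigr)+(1-\beta_k)\bigl(G^k-\nabla F(\theta^k)\bigr).
\]
There are two contributions. The noise part is a weighted martingale sum. Its conditional unbiasedness and Frobenius variance bound $\sigma^2$ give an $L^2$ bound of order $\sigma\bigl(\sum_t w_{k,t}^2(1-\beta_t)^2\bigr)^{1/2}$. Converting Frobenius to nuclear norm costs a dimension factor $\sqrt{\min(\text{block dims})}\cdot K$, which is fixed and independent of $T$. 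With $1-\beta_t=(t+1)^{-1/2}$, this gives $\mathbb E\|\cdot\|\lesssim(k+1)^{-1/4}$.

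The drift part is bounded by smoothness by $\sum_t w_{k,t}(\mathcal L_0+\mathcal L_1\|\nabla F(\theta^{t})\|_{\mathrm b,*})\eta_t$, where $w_{k,t}=\prod_{s=t+1}^k\beta_s$. The $\mathcal L_0$ piece is $\lesssim\eta_k/(1-\beta_k)\sim(k+1)^{-1/4}$. The $\mathcal L_1$ piece is a weighted sum of past gradient norms with coefficient $\mathcal L_1\eta_t$.

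\textbf{Step 3 (summation, main obstacle).} Summing Step 1 over $k<T$, substituting Step 2, and swapping the order of summation, the $\mathcal L_1$ term contributes
\[
2\mathcal L_1\sum_t\eta_t\|\nabla F(\theta^t)\|_{\mathrm b,*}\sum_{k\ge t}\frac{\eta_k}{\eta_t}w_{k,t}.
\]
Here I need the geometric-type estimate $\sum_{k\ge t}w_{k,t}\eta_k/\eta_t\le c/(1-\beta_t)\cdot(\ldots)$. This is delicate because it is about $(t+1)^{1/2}$ times $\eta_t$, which does not decay.

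This is the step I expect to be hardest. I would rather use the bound on $\|\nabla F(\theta^t)-\nabla F(\theta^{s})\|$ via the $(\mathcal L_0,\mathcal L_1)$ structure, namely that $\|\nabla F\|$ changes by at most a factor $e^{\mathcal L_1\sum\eta}$ along a path. Combined with the decay $w_{k,t}\le\exp(-\sum_{s=t+1}^{k}(s+1)^{-1/2})$, the effective window has length about $\sqrt t$, over which $\sum\eta_s\approx\eta_0(t+1)^{-1/4}\le\eta_0$. The ratio of gradients is therefore bounded by $e^{\mathcal L_1\cdot O(\eta_0)}$, and the choice $\eta_0=1/(12\mathcal L_1)$ makes the total $\mathcal L_1$ contribution at most $\tfrac14\sum\eta_k\|\nabla F(\theta^k)\|_{\mathrm b,*}$. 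That term is then absorbed on the left.

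\textbf{Step 4 (conclusion).} What remains is
\[
\tfrac14\sum_k\eta_k\mathbb E\|\nabla F(\theta^k)\|_{\mathrm b,*}\le F(\theta^0)-F_{\inf}+C\sum_k\eta_k(k+1)^{-1/4}+C\sum_k\eta_k^2 .
\]
Since $\eta_k(k+1)^{-1/4}=\eta_0(k+1)^{-1}$, its sum is $O(1+\log T)$, and $\sum_k\eta_k^2$ is bounded. Dividing by $\sum_k\eta_k\gtrsim T^{1/4}$ and bounding the minimum by the weighted average gives~\eqref{eq:lmo-convergence-bound}. The quantity bounded is $\|\nabla F\|_{\mathrm b,*}=\sum_j\|\nabla_{X_j}F\|_*$, as stated. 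When $\mathcal L_1=0$ the absorption step is vacuous and any $\eta_0$ works.
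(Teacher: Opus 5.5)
Your route differs from the paper's. The paper does not redo the descent analysis. It treats the whole tuple $\theta$ as a \emph{single} Gluon parameter group ($p=1$) normed by $\|\cdot\|_{\mathrm b}$, and uses Lemma~\ref{lem:blockwise-muon-step} with spectral--nuclear duality to identify the radius-$\eta_k$ LMO step with the blockwise polar update. It notes that \eqref{eq:lmo-smoothness} is literally Gluon's Assumption~1 with $p=1$. It then converts the Frobenius variance into a dual-norm variance with $\sigma_{\mathrm G}=\sqrt r\,\sigma$, together with the weighted martingale bound \eqref{eq:lmo-stochastic-sum}, and imports Theorem~\ref{thm:gluon-source}. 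Your Steps~1, 2 and~4 re-prove in this geometry what that theorem supplies as a black box. Your noise treatment (Frobenius martingale bound, then nuclear--Frobenius conversion) is exactly \eqref{eq:lmo-stochastic-sum}. The reduction is short and inherits the tuning $\eta_0=(12\mathcal L_1)^{-1}$ from the source. Your version is self-contained and shows where $\log T$ arises: the noise and $\mathcal L_0$-drift terms contribute $\eta_k(k+1)^{-1/4}=\eta_0(k+1)^{-1}$. The cost is that you must prove the weight estimates $\sum_t w_{k,t}^2(1-\beta_t)^2=O((k+1)^{-1/2})$, $\sum_t w_{k,t}\eta_t=O(\eta_k/(1-\beta_k))$ and $\sum_{k\ge t}w_{k,t}=O(\sqrt{t+1})$ yourself.

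Step~3, however, contains an algebra slip that manufactures your ``main obstacle''. Take the gradient at the earlier iterate, as in your drift bound, and swap sums in $\sum_k 2\eta_k\,\mathcal L_1\sum_{t<k}w_{k,t}\eta_t\|\nabla F(\theta^t)\|_{\mathrm b,*}$. This gives
\[
2\mathcal L_1\sum_t\eta_t\|\nabla F(\theta^t)\|_{\mathrm b,*}\sum_{k>t}\eta_k w_{k,t},
\]
not $\sum_{k\ge t}(\eta_k/\eta_t)w_{k,t}$: you dropped a factor $\eta_t$. Since $w_{k,t}\le e^{-2(\sqrt{k+2}-\sqrt{t+2})}$, one gets $\sum_{k>t}\eta_kw_{k,t}\le\eta_{t+1}\bigl(\sqrt{t+2}+\frac12\bigr)$. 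So, with $\eta_0=(12\mathcal L_1)^{-1}$, the coefficient of $\eta_t\|\nabla F(\theta^t)\|_{\mathrm b,*}$ is at most $\frac16\bigl((t+2)^{-1/4}+\frac12(t+2)^{-3/4}\bigr)<0.2$. Your left-hand side $\frac12\eta_t\|\nabla F(\theta^t)\|_{\mathrm b,*}$ absorbs it directly, leaving the $\frac14$ you use in Step~4.

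The Gr\"onwall-type detour is therefore unnecessary, and as stated it is false when $\mathcal L_0>0$. The gradient-norm comparison is affine, $\|\nabla F(\theta')\|_{\mathrm b,*}\le(1+\mathcal L_1\|\theta-\theta'\|_{\mathrm b})\|\nabla F(\theta)\|_{\mathrm b,*}+\mathcal L_0\|\theta-\theta'\|_{\mathrm b}$, not multiplicative: a vanishing gradient can become nonzero. The ``effective window'' heuristic would also still need explicit control of the tail $t\ll k$. With the corrected swap your argument goes through.
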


Theorem~\ref{thm:monarch-muon-main} specializes the Gluon analysis~\citep{riabinin2025gluon} to the coupled factor objective and yields $\widetilde O(\varepsilon^{-4})$ iterations to reach expected factor-gradient norm at most $\varepsilon$. Appendix~\ref{app:monarch-convergence-statement} states the assumptions in full, and Appendix~\ref{app:convergence-proofs} proves the result.

\section{Experiments}
\label{sec:experiments}
\setlength{\intextsep}{6pt plus 2pt minus 1pt}
\setlength{\textfloatsep}{10pt plus 2pt minus 2pt}
\setlength{\floatsep}{6pt plus 2pt minus 1pt}

\begin{wraptable}{r}{0.56\textwidth}
\centering
\vspace{-4mm}
\caption{Pretraining on FineWeb-Edu under a common training recipe within each architecture.}
\vspace{-2mm}
\label{tab:quality}
\vspace{2mm}
\scriptsize
\setlength{\tabcolsep}{2.5pt}
\renewcommand{\arraystretch}{0.98}
\maxsizebox{\linewidth}{!}{%
\begin{tabular}{lccc}
\toprule
\textbf{Optimizer} & \textbf{Val. loss} & \textbf{Active params} & \textbf{Total params} \\
\midrule
\multicolumn{4}{l}{\textbf{Dense 257M}} \\
AdamW & 3.02 & 257M & 257M \\
Muon & 2.99 & 257M & 257M \\
GaLore & 3.07 & 257M & 257M \\
Monarch--AdamW, 2 blocks & 3.09 & 224M & 224M \\
Monarch--Muon, 2 blocks & \textbf{3.06} & 224M & 224M \\
Monarch--AdamW, 4 blocks & 3.16 & 164M & 164M \\
Monarch--Muon, 4 blocks & \textbf{3.14} & 164M & 164M \\
\midrule
\multicolumn{4}{l}{\textbf{MoE}} \\
AdamW & 2.68 & 280M & 1.029B \\
Muon & 2.65 & 280M & 1.029B \\
Monarch--AdamW, 2 blocks & 2.77 & 237M & 767M \\
Monarch--Muon, 2 blocks & \textbf{2.68} & 237M & 767M \\
Monarch--AdamW, 4 blocks & 2.79 & 171M & 436M \\
Monarch--Muon, 4 blocks & \textbf{2.75} & 171M & 436M \\
\bottomrule
\end{tabular}}
\vspace{-4mm}
\end{wraptable}
We evaluate Monarch--Muon through four comparisons: optimization quality at fixed Monarch structure, training costs as model width increases, initialization for dense-to-Monarch transfer, and compatibility with FP8. Quality runs train a 257M dense reference and a 1B MoE reference on FineWeb-Edu~\citep{penedo2024fineweb}. Systems benchmarks cover dense reference sizes up to 6.89B on an H100. These benchmarks measure the cost of the training operations at larger widths, while the complete pretraining runs test learning quality. Appendix~\ref{app:setup} gives the protocols.

\subsection{Pretraining quality at fixed structure}
\label{sec:exp-pretrain}

\begin{wrapfigure}{r}{0.56\textwidth}
\centering
\includegraphics[width=\linewidth]{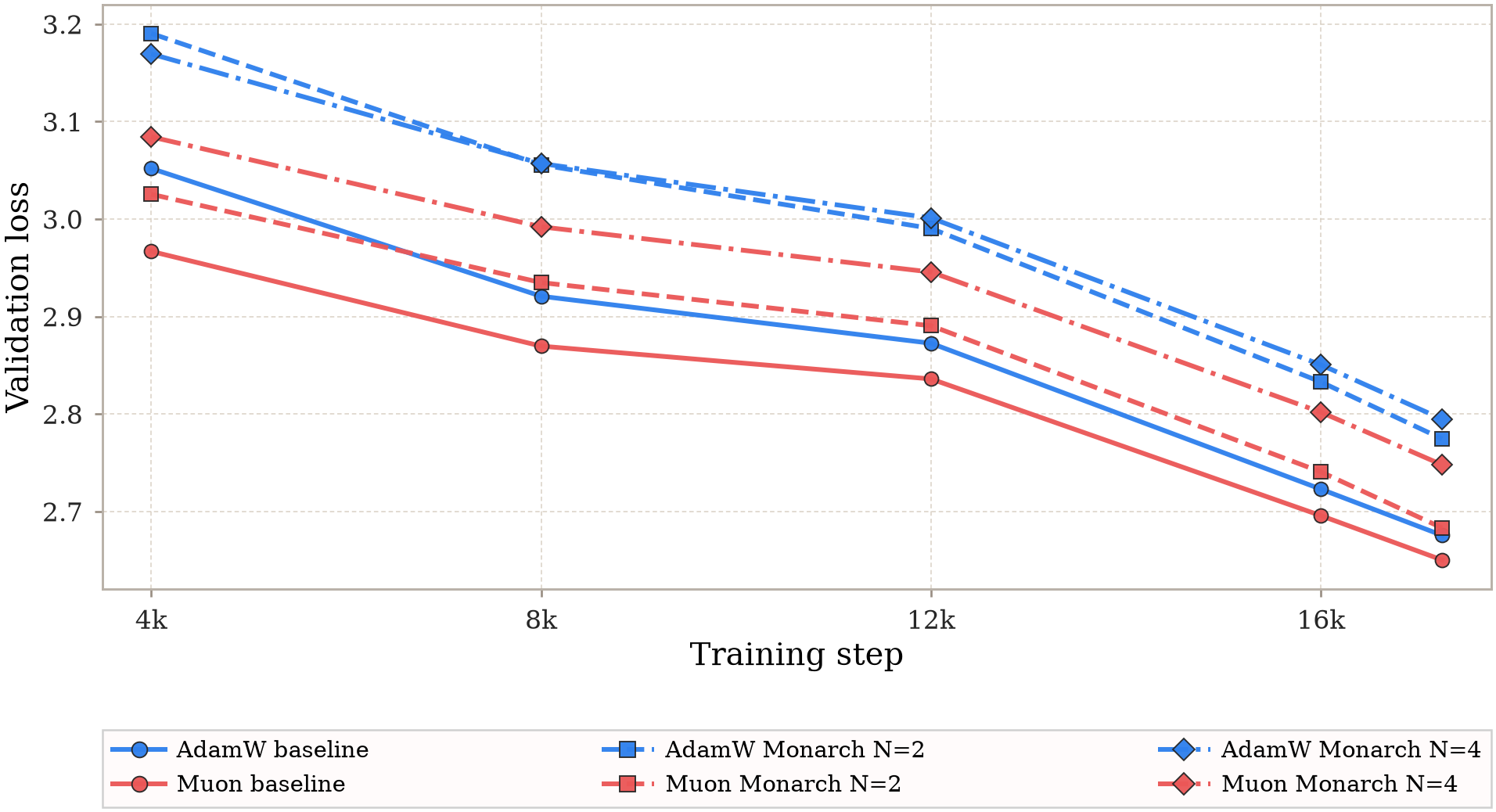}
\caption{MoE pretraining at the same token budget. Two-block Monarch--Muon reaches the reported final validation loss of dense AdamW with fewer active and total parameters.}
\vspace{-2mm}
\label{fig:moe-quality}
\end{wrapfigure}

Holding the Monarch factorization fixed isolates the effect of the optimizer. We compare AdamW and Muon with two and four blocks, using the same token budget and training schedule within each architecture. Dense AdamW, dense Muon, and GaLore provide reference losses for the dense quality model. The MoE comparison retains the same routing architecture and changes only the selected projections and their optimizer.

Monarch--Muon achieves lower final validation loss than Monarch--AdamW in all four structured comparisons (Table~\ref{tab:quality}). The two-block MoE reaches the reported loss of dense AdamW with 15\% fewer active parameters and 25.4\% fewer total parameters (767M versus 1.029B). Figure~\ref{fig:moe-quality} places this endpoint in the training trajectory. 
Thus, matrix-aware updates remain useful when the optimizer acts on structured factors rather than dense weights. Increasing the block count further reduces storage, with a higher final loss under both optimizers. Dense Muon gives the lowest loss in each architecture, defining the quality reference for this representation tradeoff.

The learned spectra provide a complementary view of the optimizer effect. At the same block count, Monarch--Muon retains a heavier singular-value tail than Monarch--AdamW in the gate and up projections (Figure~\ref{fig:monarch-gate-up-spectra}). Similar patterns appear in additional layers and components (Appendix~\ref{app:additional-spectra}). Thus, the change in update geometry reaches the represented matrices, even though orthogonalization is local to their factors.

\begin{figure}[t]
\centering
\includegraphics[width=0.8\columnwidth]{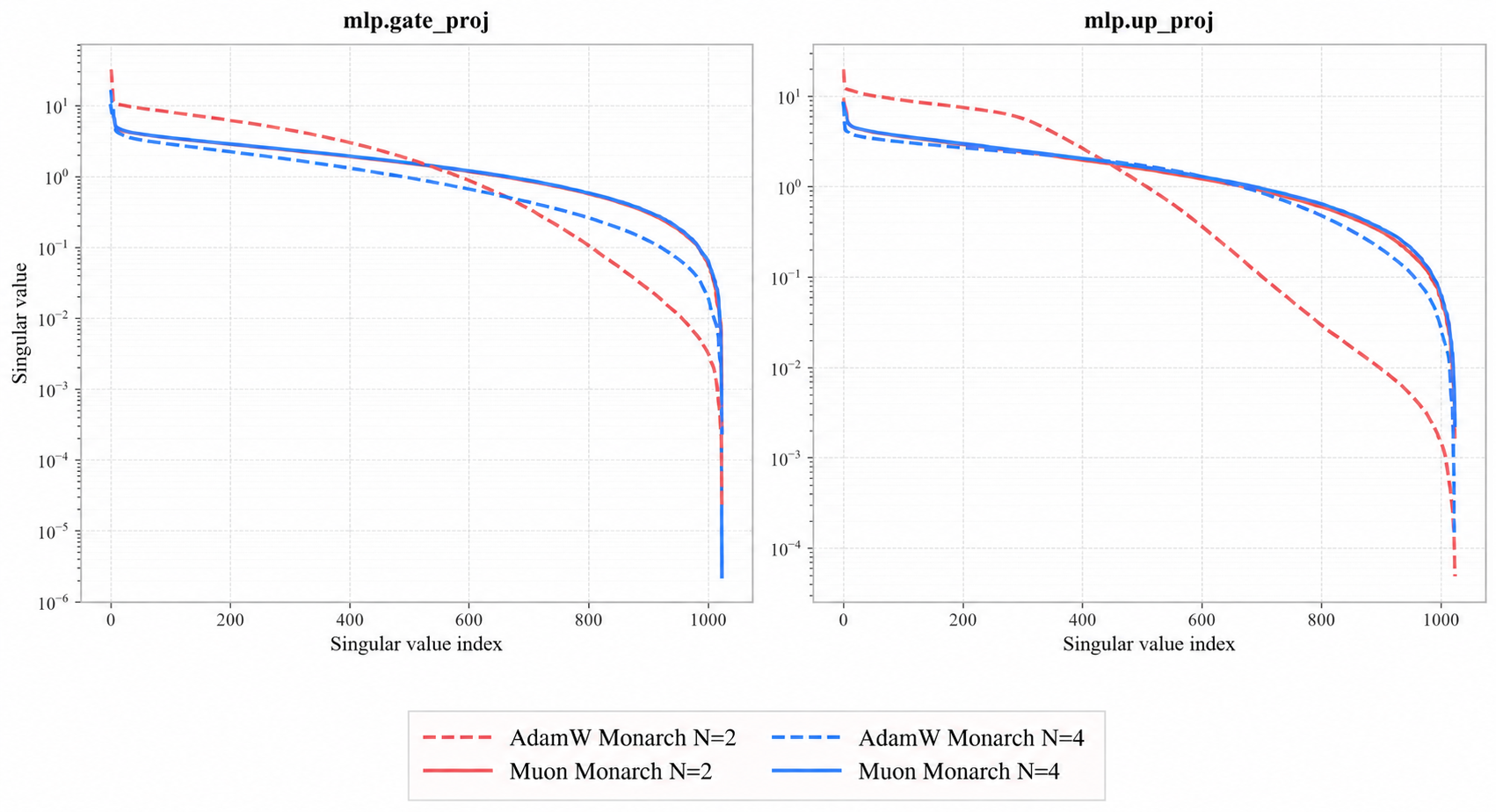}
\caption{Learned gate and up projections in layer 0 of the dense quality model. At the same block count, Muon retains larger singular values in the tail than AdamW. Additional layers and components are in Appendix~\ref{app:additional-spectra}.}
\label{fig:monarch-gate-up-spectra}
\end{figure}

\subsection{Training efficiency improves with width}
\label{sec:exp-systems}

Structure reduces more than the optimizer state: it also compresses the trainable weights and gradients. For Muon, the smaller blocks additionally reduce the cost of orthogonalization. Section~\ref{sec:theory} derives the distinct scaling laws: square-factor storage decreases as $2/m$ of the dense matrix, while Newton--Schulz arithmetic decreases as $2/m^2$, where $m$ is the block count. We test how these savings translate to allocated memory and wall time across five dense reference sizes, using four-block Monarch models and a common microbatch of four. Table~\ref{tab:memory-time} includes both optimizer and full-step measurements. Timings for projection-based methods exclude their periodic projection refreshes.

The benefit grows with width because cheaper orthogonalization is increasingly accompanied by cheaper forward and backward passes. At 6.89B, Monarch--Muon reduces the full-step time by 45\% and peak memory by 23\% relative to dense Muon. Its optimizer step is $4.2\times$ faster and its state occupies 54\% less memory. Figure~\ref{fig:step-time-breakdown} shows that both the model computations and the optimizer contribute to the full-step reduction. At this scale, Monarch--Muon also has the shortest full step and lowest peak memory among the measured state-efficient methods, even though their optimizer steps alone are cheaper. Compressing the representation therefore changes costs that optimizer-state compression leaves dense.

The smallest model marks the hardware crossover: its reduced optimizer cost does not offset the overhead of executing small factor multiplications, and peak memory is slightly higher than for dense Muon. From the 834M reference onward, Monarch--Muon improves both full-step time and peak memory relative to dense Muon. Full-step measurements capture this crossover, which optimizer-step timings alone would miss.

\begin{table}[t]
\centering
\caption{Training costs on one H100 at microbatch 4, indexed by the dense reference model size. Monarch--Muon uses 4 blocks. Top: peak allocated memory and optimizer buffers, including projection matrices (GB). Bottom: steady-state training-step time, with optimizer-step time in parentheses (ms).}
\label{tab:memory-time}
\footnotesize
\setlength{\tabcolsep}{3pt}
\renewcommand{\arraystretch}{0.94}
\maxsizebox{\linewidth}{!}{%
\begin{tabular}{l*{10}{r}}
\toprule
& \multicolumn{2}{c}{\textbf{257M}} & \multicolumn{2}{c}{\textbf{834M}} & \multicolumn{2}{c}{\textbf{1.4B}} & \multicolumn{2}{c}{\textbf{3.5B}} & \multicolumn{2}{c}{\textbf{6.9B}} \\
\cmidrule(lr){2-3}\cmidrule(lr){4-5}\cmidrule(lr){6-7}\cmidrule(lr){8-9}\cmidrule(l){10-11}
\textbf{Method} & \textbf{Peak} & \textbf{State} & \textbf{Peak} & \textbf{State} & \textbf{Peak} & \textbf{State} & \textbf{Peak} & \textbf{State} & \textbf{Peak} & \textbf{State} \\
\midrule
Dense Muon & 6.26 & 0.72 & 14.35 & 1.98 & 20.51 & 3.29 & 39.71 & 7.58 & 68.87 & 14.60 \\
Monarch--Muon & 6.38 & 0.53 & 14.07 & 1.16 & 18.94 & 1.79 & 33.15 & 3.75 & 52.94 & 6.77 \\
GaLore & 6.17 & 0.61 & 13.95 & 1.50 & 19.72 & 2.41 & 37.47 & 5.33 & 64.27 & 10.00 \\
FRUGAL & 6.11 & 0.57 & 13.73 & 1.36 & 19.40 & 2.17 & 36.35 & 4.22 & 62.28 & 8.01 \\
APOLLO & 6.16 & 0.61 & 13.93 & 1.50 & 19.72 & 2.41 & 37.47 & 5.33 & 64.27 & 10.00 \\
Fira & 6.16 & 0.61 & 13.96 & 1.50 & 19.72 & 2.41 & 37.47 & 5.33 & 64.27 & 10.00 \\
Dense AdamW & 6.59 & 1.03 & 15.71 & 3.34 & 22.98 & 5.76 & 46.05 & 13.92 & 81.82 & 27.55 \\
\bottomrule
\end{tabular}}

\vspace{6pt}
\begin{tabular}{l*{5}{r}}
\toprule
\textbf{Method} & \textbf{257M} & \textbf{834M} & \textbf{1.4B} & \textbf{3.5B} & \textbf{6.9B} \\
\midrule
Dense Muon & 146.5 (26.2) & 428.2 (88.9) & 650.4 (156.7) & 1473.0 (505.2) & 2986.9 (1288.3) \\
Monarch--Muon & 165.8 (8.0) & 399.8 (29.2) & 535.7 (50.5) & 997.9 (140.5) & 1638.6 (304.4) \\
GaLore & 126.4 (6.1) & 354.6 (14.6) & 511.8 (20.4) & 1005.7 (45.2) & 1778.5 (93.9) \\
FRUGAL & 124.9 (4.9) & 349.9 (11.6) & 507.5 (16.8) & 989.8 (36.0) & 1739.7 (69.2) \\
APOLLO & 132.6 (12.5) & 365.3 (27.7) & 525.7 (32.5) & 1016.0 (63.1) & 1792.5 (119.3) \\
Fira & 136.6 (16.2) & 372.1 (34.6) & 538.8 (47.3) & 1055.8 (99.0) & 1869.4 (199.8) \\
Dense AdamW & 120.9 (1.9) & 345.3 (5.7) & 502.9 (9.9) & 986.2 (24.1) & 1723.9 (47.5) \\
\bottomrule
\end{tabular}

\end{table}

\newpage
\subsection{Initialization for dense-to-Monarch transfer}
\label{sec:exp-finetune}

Pretraining learns within the structured representation from the outset. Converting a dense checkpoint instead requires choosing which transformations to replace and how to initialize their factors. We study these choices on the Muon- and AdamW-trained Moonlight-16B-A3B checkpoints~\citep{liu2025muonscalable} and then test initialization followed by SFT on Qwen2.5-7B~\citep{qwen2025qwen25technicalreport}.

\paragraph{Joint sensitivity guides conversion.}
\label{sec:exp-layerwise}

Closed-form Monarch projection~\citep{dao2022monarchexpressivestructuredmatrices} minimizes weight-reconstruction error for a fixed factorization. Functional sensitivity identifies which layers can be converted with little change to model predictions. Figure~\ref{fig:moonlight-singles} measures KL divergence to each original checkpoint after replacing one FFN with its four-block projection. The first dense FFN and the final MoE FFN are substantially more sensitive than the interior MoE layers. Both checkpoints are evaluated at step 42{,}000 on 32 held-out sequences of 2048 tokens, without retraining (Appendix~\ref{app:setup-moonlight}). Weight-projection errors do not capture this variation, and interactions between converted layers require candidate sets to be validated jointly. Appendix~\ref{app:moonlight-conversion-analysis} provides the weight-space diagnostics and joint-conversion results.

\begin{figure}[!htbp]
\centering
\includegraphics[width=0.6\linewidth]{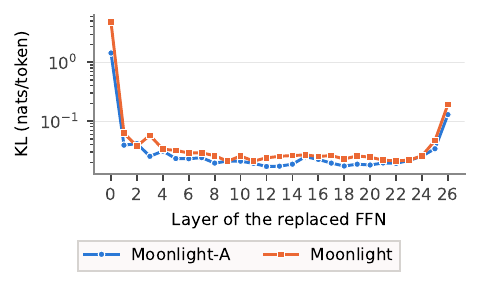}
\caption{Moonlight checkpoints: KL to the original model after replacing one FFN by its 4-block projection (nats/token, log scale). Layer 0 is dense. Layers 1--26 are MoE.}
\label{fig:moonlight-singles}
\end{figure}
\FloatBarrier

\paragraph{Fit layer outputs before fine-tuning.}
Qwen SFT confirms that the choice of components matters beyond the initial projection. We fine-tune converted models for one epoch on 10{,}000 examples from the Tulu~3 SFT mixture~\citep{lambert2025tulu3}. We measure response negative log-likelihood (NLL) on held-out test data and select learning rates on a separate validation split. Table~\ref{tab:sft-selective} separates the effects of layer range and projection type. Converting gate and up projections gives lower post-SFT NLL than converting the full MLP at similar parameter counts, under both optimizers. This ordering persists when gate+up conversion spans more layers and uses fewer parameters than the full-MLP comparison. The choice of transformation therefore matters beyond the total number of factors. In particular, preserving the down projection improves the tested quality--storage tradeoff. Full conversion retains a larger NLL gap after SFT, motivating selective conversion followed by calibration.

\begin{table}[!htbp]
\centering
\caption{Qwen2.5-7B: selective 2-block conversion followed by SFT. Pre-SFT NLL uses validation data. Post-SFT test NLL uses validation-selected learning rates.}
\label{tab:sft-selective}
\footnotesize
\setlength{\tabcolsep}{2.2pt}
\maxsizebox{\linewidth}{!}{%
\begin{tabular}{lrrrrr}
\toprule
& & & \textbf{NLL} & \multicolumn{2}{c}{\textbf{Test NLL after SFT}} \\
\cmidrule(l){5-6}
\textbf{Converted} & \textbf{\#mat.} & \textbf{Params} & \textbf{before} & \textbf{AdamW} & \textbf{Muon} \\
\midrule
None (dense) & 0 & 7.62B & 0.737 & 0.631 & 0.643 \\
All projections & 196 & 5.26B & 9.951 & 1.164 & 1.125 \\
\midrule
MLP, layers 14--26 & 39 & 6.54B & 2.181 & 0.804 & 0.811 \\
MLP, layers 8--26 & 57 & 6.05B & 2.784 & 0.858 & 0.866 \\
Gate+up, layers 14--26 & 26 & 6.90B & 1.362 & 0.731 & 0.741 \\
Gate+up, layers 8--26 & 38 & 6.57B & 1.729 & 0.762 & 0.773 \\
Gate+up, layers 2--26 & 50 & 6.24B & 3.19 & 0.791 & 0.804 \\
\bottomrule
\end{tabular}}
\end{table}

Weight-space fitting treats every input direction equally. Activation-aware fitting instead weights the error by the calibration activations. Appendix~\ref{app:qwen-weight-diagnostics} reports the weight-space diagnostic motivating this choice. We alternate between the two Monarch factors on 128 calibration examples and select the fit on 32 held-out examples, with the full procedure in Appendix~\ref{app:setup-finetune}.

\FloatBarrier
 
Table~\ref{tab:conversion} isolates initialization with the same Monarch-factor shapes. Activation-aware fitting~\citep{nakamura2026improvedmonarch} reduces the layer-output error by 40\% and closes 76\% of the pre-SFT NLL gap to the dense reference. Its advantage remains after SFT with both AdamW and Muon. Orthogonal rotations~\citep{grishina2025procrustesgpt} improve recovery further, at the additional computation described in Appendix~\ref{app:setup-finetune}. The tested permutation search~\citep{mohamed2025learningpermutations} leaves the initialization unchanged. The consistent ordering before and after SFT identifies preservation of layer outputs as a useful criterion for initializing structured factors.

\begin{table}[H]
\centering
\caption{Qwen MLPs 14--26 with fixed 2-block factor shapes. Output error: layer mean of summed relative squared MLP errors. Pre-SFT: validation. Post-SFT: test NLL with validation-selected rates. Rotations add dense products.}
\label{tab:conversion}
\footnotesize
\setlength{\tabcolsep}{2.2pt}
\maxsizebox{\linewidth}{!}{%
\begin{tabular}{lrrrrr}
\toprule
& \textbf{Output} & \multicolumn{2}{c}{\textbf{Before SFT}} & \multicolumn{2}{c}{\textbf{Test NLL after SFT}} \\
\cmidrule(lr){3-4}\cmidrule(l){5-6}
\textbf{Initialization} & \textbf{err.} & \textbf{NLL} & \textbf{KL} & \textbf{AdamW} & \textbf{Muon} \\
\midrule
Dense model & -- & 0.737 & 0 & 0.630 & 0.638 \\
A: closed-form projection & 0.595 & 2.181 & 1.52 & 0.802 & 0.813 \\
B: output fit & 0.357 & 1.083 & 0.41 & 0.765 & 0.775 \\
C: permutation + B & 0.357 & 1.083 & 0.41 & 0.765 & 0.775 \\
D: rotations + B & 0.315 & 1.024 & 0.34 & 0.748 & 0.758 \\
\bottomrule
\end{tabular}}
\end{table}

Initialization changes post-SFT loss more than the choice between the two factor optimizers in this ablation. AdamW gives slightly lower loss on the selective conversions of this AdamW-pretrained checkpoint, consistent with the observations on retaining the pretraining optimizer in \citet{liu2025muonscalable}. The transfer study therefore complements the pretraining result: matrix-aware updates improve learning from scratch, while selective conversion and activation-aware initialization improve recovery of pretrained functions.

\subsection{FP8 complements structured training}
\label{sec:exp-lowbit}

Low precision reduces the bytes and arithmetic cost of the representation that Monarch has already compressed. Following the FP8 approach of COAT~\citep{xi2025coat}, we evaluate two separate extensions of the two-block dense quality model: FP8 factor GEMMs with compressed activations, and FP8 optimizer buffers with BF16 GEMMs. Both use the same 39{,}250-step training recipe, with final loss evaluated in BF16 (Appendix~\ref{app:setup-lowbit}).

The FP8 computation and activation recipe preserves the optimizer ordering while reducing peak memory by about 31\% and training-step time by 14\% for Monarch--Muon. Final loss remains within 0.23\% of BF16 for both optimizers (Table~\ref{tab:lowbit-full}). Quantizing optimizer buffers separately gives nearly unchanged loss and reduces their storage by $3.7\times$. These results show two complementary uses of low precision: reducing the larger activation footprint and further compressing the already smaller optimizer state.

\begin{table}[!htbp]
\centering
\caption{Two-block Monarch after 39{,}250 steps. FP8 GEMM compresses computation and activations. FP8 states retain BF16 GEMMs. Validation uses BF16. Peak memory: final step on one H100.}
\label{tab:lowbit-full}
\footnotesize
\setlength{\tabcolsep}{3pt}
\begin{tabular}{llrr}
\toprule
\textbf{Optimizer} & \textbf{Precision} & \textbf{Val. loss} & \textbf{Peak GB} \\
\midrule
Monarch--Muon & BF16 & 3.059 & 49.47 \\
 & FP8 GEMM & 3.066 & 34.28 \\
 & FP8 states & 3.060 & 48.52 \\
\midrule
Monarch--AdamW & BF16 & 3.088 & 49.97 \\
 & FP8 GEMM & 3.093 & 34.75 \\
 & FP8 states & 3.087 & 48.66 \\
\bottomrule
\end{tabular}
\end{table}

\begin{table}[!htbp]
\centering
\caption{Monarch RMSNorm + SwiGLU forward/backward time on an H100 (ms), with prequantized weights. Shapes: hidden / MLP / tokens. TE FP8 (Transformer Engine): best of three recipes. Batched FP8 speedups use TE BF16 at the same block count.}
\vspace{2mm}
\label{tab:lowbit-layer}
\footnotesize
\setlength{\tabcolsep}{2.2pt}
\maxsizebox{\linewidth}{!}{%
\begin{tabular}{llrrr}
\toprule
\textbf{Shape} & \textbf{Blocks} & \textbf{TE BF16} & \textbf{TE FP8} & \textbf{Ours FP8} \\
\midrule
1024 / 2048 / 26{,}624 & 2 & 4.44 & 5.04 (0.88$\times$) & 3.51 (1.26$\times$) \\
 & 4 & 4.67 & 5.64 (0.83$\times$) & 3.71 (1.26$\times$) \\
2048 / 6144 / 16{,}384 & 2 & 7.84 & 8.18 (0.96$\times$) & 6.01 (1.30$\times$) \\
 & 4 & 7.27 & 8.16 (0.89$\times$) & 5.70 (1.28$\times$) \\
2048 / 16{,}384 / 8192 & 2 & 8.35 & 8.49 (0.98$\times$) & 6.65 (1.25$\times$) \\
 & 4 & 7.42 & 8.25 (0.90$\times$) & 6.02 (1.23$\times$) \\
7168 / 16{,}384 / 8192 & 2 & 23.89 & 17.96 (1.33$\times$) & 16.33 (1.46$\times$) \\
 & 4 & 16.20 & 14.68 (1.10$\times$) & 11.77 (1.38$\times$) \\
\bottomrule
\end{tabular}}
\end{table}

Realizing the compute benefit requires execution suited to the factor shapes. Our implementation quantizes the shared input once and processes the blocks of each factor in one batched FP8 GEMM. Across the large MLP shapes in Table~\ref{tab:lowbit-layer}, this gives $1.23$--$1.46\times$ faster forward and backward passes than BF16 at the same block count. Standard Transformer Engine FP8 recipes do not consistently accelerate these shapes. At the largest shape, the four-block FP8 MLP is $2.22\times$ faster than the dense BF16 reference. The layer measurements identify batched execution as a practical complement to the savings of the factorization.

\FloatBarrier
\section{Conclusion}

Monarch--Muon combines a full-rank-capable structured representation with blockwise matrix-aware updates. This reduces weight, gradient, and optimizer-state storage and replaces dense orthogonalization with smaller independent computations. Our exact-polar analysis gives a convergence guarantee under generalized smoothness. In dense and MoE pretraining, Muon improves loss at fixed Monarch structure, with the two-block MoE matching the reported dense AdamW loss using fewer parameters. Systems measurements show increasing full-step savings at larger widths, complemented by FP8. Dense-checkpoint conversion benefits from jointly selecting components and initializing factors to preserve layer outputs.

\paragraph{Limitations and future work.}
Efficiency depends on matrix shape, block count, and kernel execution, motivating sensitivity-guided block allocation and batched low-precision kernels. Complete pretraining at controlled scales and systems measurements at larger widths provide complementary evidence. Longer distributed runs would test their joint quality and hardware tradeoff. Analysis of finite Newton--Schulz updates would connect the exact-polar guarantee to practical orthogonalization.



\end{mainpart}

\begin{appendixpart}

\section{Experimental Details}
\label{app:setup}

\paragraph{Assets and licenses.} We use FineWeb-Edu \citep{penedo2024fineweb} and the Tulu~3 SFT mixture \citep{lambert2025tulu3} (both ODC-By), Qwen2.5-7B \citep{qwen2025qwen25technicalreport} (Apache-2.0), the Moonlight models \citep{liu2025muonscalable} (MIT), MMLU \citep{hendrycks2021mmlu} and GSM8K \citep{cobbe2021gsm8k} (MIT) evaluated with the LM Evaluation Harness \citep{gao2024evalharness} (MIT), COAT \citep{xi2025coat} (Apache-2.0) and Megatron-Core \citep{shoeybi2019megatron} (BSD-3-Clause). Our code is released at the GitHub URL given in the abstract.

\subsection{Pretraining}
\label{app:setup-pretrain}
Table~\ref{tab:quality} compares final validation losses at a common training budget within each architecture. The AdamW and Muon Monarch variants share the same factor shapes and converted components. The comparisons use the common training recipes and learning rates specified below. Each configuration is a single training run.

The dense 257M model has 12 layers, width 1024, 8 attention heads, a SwiGLU MLP of width 2816, an untied output head, vocabulary size 50{,}304 and sequence length 1024. All dense runs train for 39{,}250 steps of 128 sequences ($\approx$5.1B tokens of FineWeb-Edu \citep{penedo2024fineweb}, about 20 tokens per parameter) with learning rate $10^{-3}$, weight decay 0.1, $(\beta_1,\beta_2)=(0.9,0.99)$ ($\beta_2=0.999$ for GaLore, which updates its projection every 50 steps), gradient clipping at 1, and a warmup--stable--decay schedule with 2000 warmup steps whose last 10\% decays to zero; Muon uses momentum 0.95 and five Newton--Schulz iterations. In the Monarch models every linear projection except the output head is replaced by $P_1LP_2R$ with 2 or 4 blocks; embeddings, normalization weights and the output head are dense and are updated by AdamW in all Muon runs. Weights are kept in FP32 with BF16 autocast.
The MoE model is trained with Megatron-Core \citep{shoeybi2019megatron}. It has 18 layers of width 1024 with grouped-query attention (8 query heads, 2 key--value groups, head dimension 128), QK-normalization, rotary embeddings, RMSNorm, SwiGLU, an untied output head and sequence length 2048. The first layer has a dense FFN of width 2816; each of the other 17 holds 64 routed experts of width 256 with top-8 sigmoid routing and auxiliary-loss-free expert biases, plus one shared expert of width 256, for 1.029B total and 280M active parameters. Training uses FineWeb-Edu with the GPT-2 tokenizer, a global batch of 208 sequences and 17{,}242 steps (7.34B tokens) under a warmup--stable--decay schedule with 173 warmup steps and an exponential decay to $0.1\times$ the peak over the last 3{,}448 steps. All runs use peak learning rate $1.63\times10^{-3}$, $(\beta_1,\beta_2)=(0.9,0.95)$, weight decay 0.1, gradient clipping at 1 and BF16. Muon uses Nesterov momentum 0.95 and five quintic Newton--Schulz iterations with spectral scaling; embeddings, the output head, normalization weights and routers are updated by AdamW. In the Monarch models the attention projections, the dense FFN and every routed and shared expert are replaced by Monarch factors with 2 or 4 blocks, while the routers stay dense; Monarch--Muon orthogonalizes each block separately. Factor storage is counted for the physical fused projections: QKV is $1536\times1024$, the layer-0 gate/up projection is $5632\times1024$, and each routed or shared expert has a $512\times1024$ gate/up projection and a $1024\times256$ down projection. These dimensions are divisible by both block counts, so no padding is needed. Including the untied embedding and output head, normalization weights and routers gives exactly 1{,}028{,}926{,}976 total / 280{,}243{,}712 active parameters for the dense model, 767{,}471{,}104 / 237{,}153{,}792 for two blocks, and 435{,}777{,}024 / 170{,}618{,}368 for four blocks. Active counts include all shared-expert parameters and eight of the 64 routed experts per MoE layer.

\subsection{Training-efficiency measurements}
\label{app:setup-membench}
The systems experiments measure training operations on larger reference architectures, separately from the complete quality runs in Appendix~\ref{app:setup-pretrain}. Model-size headings give the dense reference counts. Monarch uses fewer trainable parameters at these reference widths. The timings measure the practical cost of each representation, including its forward, backward, and optimizer computations.

Table~\ref{tab:memory-time} uses dense models of 257M, 834M, 1.4B, 3.5B and 6.89B parameters, Monarch--Muon with 4 blocks, microbatch 4 and random token batches, all on one NVIDIA H100 80GB. Step time is the median wall time of a full training step (forward, backward, optimizer step and gradient reset) after warmup; the optimizer step is timed separately. Peak memory is the maximum allocated memory during the step, and optimizer state is the memory of all optimizer buffers, including projection matrices. For GaLore, APOLLO, FRUGAL and Fira the projection is rebuilt every 200 steps; the rebuild is timed separately and excluded from the step time.

\FloatBarrier
\subsection{Monarch projection of Moonlight}
\label{app:setup-moonlight}
Moonlight-16B-A3B and Moonlight-A share the architecture and the training data and differ only in the optimizer; we use both after 42{,}000 training steps. The architecture has 27 layers with multi-head latent attention (query, shared key--value down-projection, key--value up-projection and output matrices), a dense FFN in layer 0, and 64 routed plus 2 shared experts in layers 1--26 (15.96B parameters). Every selected matrix is replaced by its closed-form 4-block Monarch projection, which keeps 28--50\% of its parameters. The Gaussian reference in Figure~\ref{fig:relerr} is the projection error of i.i.d.\ Gaussian matrices of the same shapes. KL divergence to the original model and NLL are measured on 32 held-out sequences of 2048 tokens from a mixture of FineWeb-Edu, code and mathematical text; MMLU \citep{hendrycks2021mmlu} is 5-shot with 100 questions per subject and GSM8K \citep{cobbe2021gsm8k} is 4-shot on 400 problems, both run with the LM Evaluation Harness \citep{gao2024evalharness}. The original models score 61.9 (Muon) and 61.3 (AdamW) on MMLU and 38.0 and 38.3 on GSM8K.

\paragraph{Weight-space alignment.}
\label{app:moonlight-conversion-analysis}
Closed-form Monarch projection~\citep{dao2022monarchexpressivestructuredmatrices} minimizes weight-reconstruction error for a fixed factorization. We apply the closed-form projection with four blocks and compare relative Frobenius errors against Gaussian matrices of the same shapes. 
Across Moonlight layers, most normalized errors are near 1 (Figure~\ref{fig:relerr}). The query projections of the AdamW-trained checkpoint are a notable exception. Thus, the optimizer alone does not generally induce alignment with the fixed Monarch structure in dense training.

\begin{figure}[!htbp]
\centering
\includegraphics[width=0.75\linewidth]{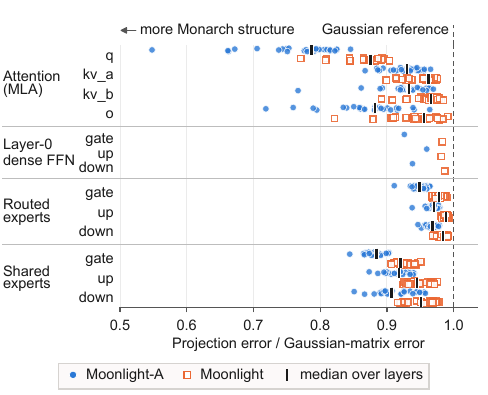}
\caption{Moonlight-16B-A3B: 4-block weight-projection errors normalized by Gaussian errors at matching shapes. Values near 1 indicate little alignment with Monarch structure. Points denote layers, pooling routed experts by projection type.}
\label{fig:relerr}
\end{figure}

\paragraph{Joint conversion.}
The sensitivity of a complete conversion also depends on interactions between layers. Replacing FFNs in layers 2--25 together produces KL divergence $7.4$--$16.7\times$ the sum of their isolated effects. Candidate sets must therefore be evaluated jointly. 

Table~\ref{tab:moonlight-joint} shows that position and component type remain important at the same storage budget: late routed-expert conversion retains substantially more MMLU~\citep{hendrycks2021mmlu} accuracy than early routed-expert conversion at identical 24.1\% parameter savings. Selecting late FFNs removes 14.5\% of all parameters while keeping MMLU within 1.1 points of the original checkpoints, without retraining. These comparisons support a concrete conversion strategy: use isolated-layer sensitivity to identify candidates, then validate their joint effect before fine-tuning.

\begin{table}[H]
\centering
\caption{Moonlight joint 4-block conversion without retraining. Saved: fraction of all parameters removed. KL: nats/token to the original model. MMLU: 5-shot accuracy (\%).}
\label{tab:moonlight-joint}
\footnotesize
\setlength{\tabcolsep}{2.2pt}
\maxsizebox{\linewidth}{!}{%
\begin{tabular}{lrrrrr}
\toprule
& & \multicolumn{2}{c}{\textbf{KL}} & \multicolumn{2}{c}{\textbf{MMLU}} \\
\cmidrule(lr){3-4}\cmidrule(l){5-6}
\textbf{Replaced blocks} & \textbf{Saved} & \textbf{Muon} & \textbf{AdamW} & \textbf{Muon} & \textbf{AdamW} \\
\midrule
None & 0 & 0 & 0 & 61.9 & 61.3 \\
FFN, layers 19--25 & 14.5\% & 0.48 & 0.33 & 60.8 & 60.4 \\
FFN, layers 14--25 & 24.8\% & 1.26 & 0.71 & 50.5 & 53.8 \\
FFN, layers 8--25 & 37.2\% & 4.62 & 1.24 & 29.1 & 32.1 \\
Routed experts, layers 14--25 & 24.1\% & 0.53 & 0.33 & 59.3 & 58.3 \\
Routed experts, layers 2--13 & 24.1\% & 1.45 & 0.78 & 24.5 & 33.2 \\
All blocks & 55.4\% & 11.12 & 7.93 & 24.5 & 25.3 \\
\bottomrule
\end{tabular}}
\end{table}

\FloatBarrier
\subsection{Fine-tuning}
\label{app:setup-finetune}
We use Qwen2.5-7B (28 layers, 7.62B parameters) with 2 blocks per converted projection; with 2 blocks a square projection keeps all of its parameters, so the savings come from the MLP and the key and value projections. Each model is fine-tuned for one epoch (313 steps) on 10{,}000 examples of the Tulu~3 SFT mixture. The metric is the negative log-likelihood of the response tokens on a held-out test split (184{,}841 tokens); the learning rate is chosen on a separate validation split (168{,}277 tokens). In Table~\ref{tab:sft-selective}, the dense, all-projection and MLP 14--26 rows use the grids $\{10^{-5},3\cdot10^{-5},10^{-4}\}$ (AdamW) and $\{3\cdot10^{-4},10^{-3},3\cdot10^{-3}\}$ (Muon); MLP 8--26 uses $\{3\cdot10^{-6},10^{-5},3\cdot10^{-5}\}$ and $\{3\cdot10^{-4},10^{-3},3\cdot10^{-3}\}$; the gate+up rows use $\{3\cdot10^{-6},10^{-5},3\cdot10^{-5}\}$ and $\{10^{-4},3\cdot10^{-4}\}$. Several selected rates lie at the lower edge of their grid (dense with both optimizers, MLP 14--26, and MLP 8--26 with Muon). Table~\ref{tab:conversion} uses $\{3\cdot10^{-6},10^{-5},3\cdot10^{-5}\}$ and $\{10^{-4},3\cdot10^{-4},10^{-3}\}$ for all rows. All results are single runs.

\paragraph{Weight-space diagnostic.}
\label{app:qwen-weight-diagnostics}
Figure~\ref{fig:relerr-qwen} extends the weight-space diagnostic to Qwen. Its 84 MLP matrices have a mean relative Frobenius error of 0.52, compared with 0.56 for Gaussian matrices under the same two-block projection. This limited alignment motivates fitting layer outputs on calibration activations rather than minimizing weight-reconstruction error alone.

\begin{figure}[!htbp]
\centering
\includegraphics[width=0.65\linewidth]{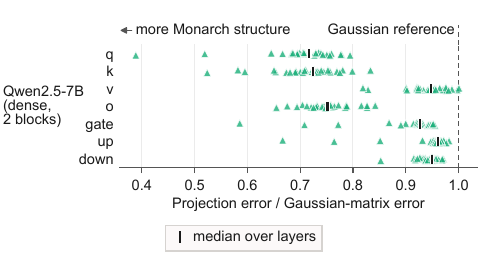}

\caption{Qwen2.5-7B: 2-block weight-projection errors normalized by Gaussian errors at matching shapes. Points denote layers.}
\label{fig:relerr-qwen}
\end{figure}

\FloatBarrier

\paragraph{Initialization at fixed factor shapes.}
Table~\ref{tab:conversion} tests output-preserving initialization for the selected MLPs of layers 14--26. The weight-space diagnostic is given in Appendix~\ref{app:qwen-weight-diagnostics}, and the initialization comparison is discussed in Section~\ref{sec:exp-finetune}. All four initializations produce Monarch factors of identical size. The rotation variant additionally uses the dense transformations specified below. (A) is the closed-form projection. (B) starts from A and alternates between the two factors to minimize $\|XW^\top - X\widehat W(L,R)^\top\|_F$, where $X$ are the inputs of the matrix in the dense model on 128 calibration examples: the left factor is solved exactly by least squares and the right one by 30 conjugate-gradient steps, for four passes, keeping the pass with the lowest error on 32 held-out examples. (C) searches, by an assignment problem (16 iterations), a permutation of the smaller dimension of each matrix (the input of gate and up, the output of down) that minimizes the weight error of the closed-form projection, and then applies B. (D) learns orthogonal rotations $Q_i$ of the MLP input, shared by the gate and up projections, and $Q_o$ of its output by a Procrustes problem, and then applies B to the rotated weights and their correspondingly rotated inputs; the rotations cost two dense $3584\times3584$ products per layer.

\FloatBarrier
\subsection{Low-precision training}
\label{app:setup-lowbit}
The two FP8 settings in Table~\ref{tab:lowbit-full} are separate runs. The GEMM setting changes the computation and activation-storage recipe, while the state setting isolates optimizer-buffer storage. Neither row represents their combination. The GEMM recipe reduces full-step time by 14\% for Monarch--Muon and 15\% for Monarch--AdamW. FP8 states reduce stored optimizer buffers by $3.7\times$, lowering the final-step peak by 0.95~GB and 1.31~GB, respectively. The dense BF16 MLP reference for the largest shape in Table~\ref{tab:lowbit-layer} takes 26.08~ms, giving $1.60\times$ and $2.22\times$ speedups for the two- and four-block FP8 implementations. These layer timings use prequantized weights and do not measure a complete training step.

All low-precision runs train the two-block Monarch model with the recipe of Appendix~\ref{app:setup-pretrain}. \emph{FP8 GEMM}: every Monarch block multiplication of both factors runs in FP8 in the forward and backward passes, with E4M3 weights and inputs, E5M2 gradients, per-tensor scaling and one batched cuBLASLt GEMM per factor; on top of this, the inputs of normalizations and activation functions are stored in FP8 with per-group scales. The output head and the attention core stay in higher precision, the residual stream is kept in BF16 rather than FP32, and validation loss is evaluated in BF16. \emph{FP8 optimizer states}: BF16 GEMMs and activations, with the Muon momentum and the AdamW moments stored in E4M3 with per-group scales and decoded to FP32 for the update. Weights and gradients are FP32 in all settings. The layer-level benchmark of Table~\ref{tab:lowbit-layer} times one RMSNorm + SwiGLU MLP block in Monarch form, forward and backward, with weights already quantized.

\section{Additional spectra}\label{app:additional-spectra}Figure~\ref{fig:monarch-gate-up-spectra} gives the compact layer-0 gate/up view used in the main text. Figures~\ref{fig:monarch-spectra-layers-a} and~\ref{fig:monarch-spectra-layers-b} provide layer-wise spectra for representative layers and all projected components.

\begin{figure}[!ht]
\centering
\includegraphics[width=0.7\textwidth]{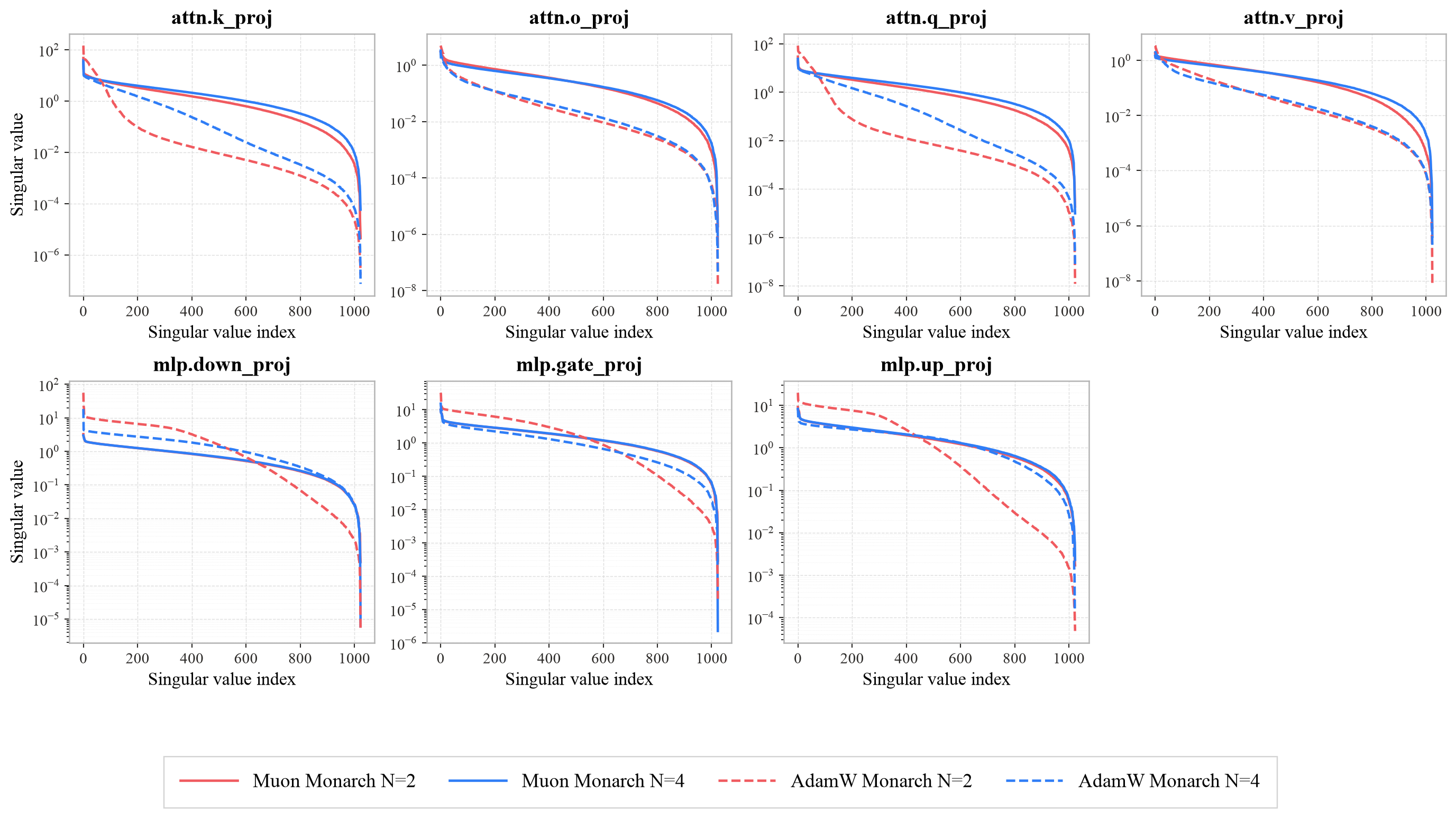}
\par\vspace{0.5em}\par
\includegraphics[width=0.7\textwidth]{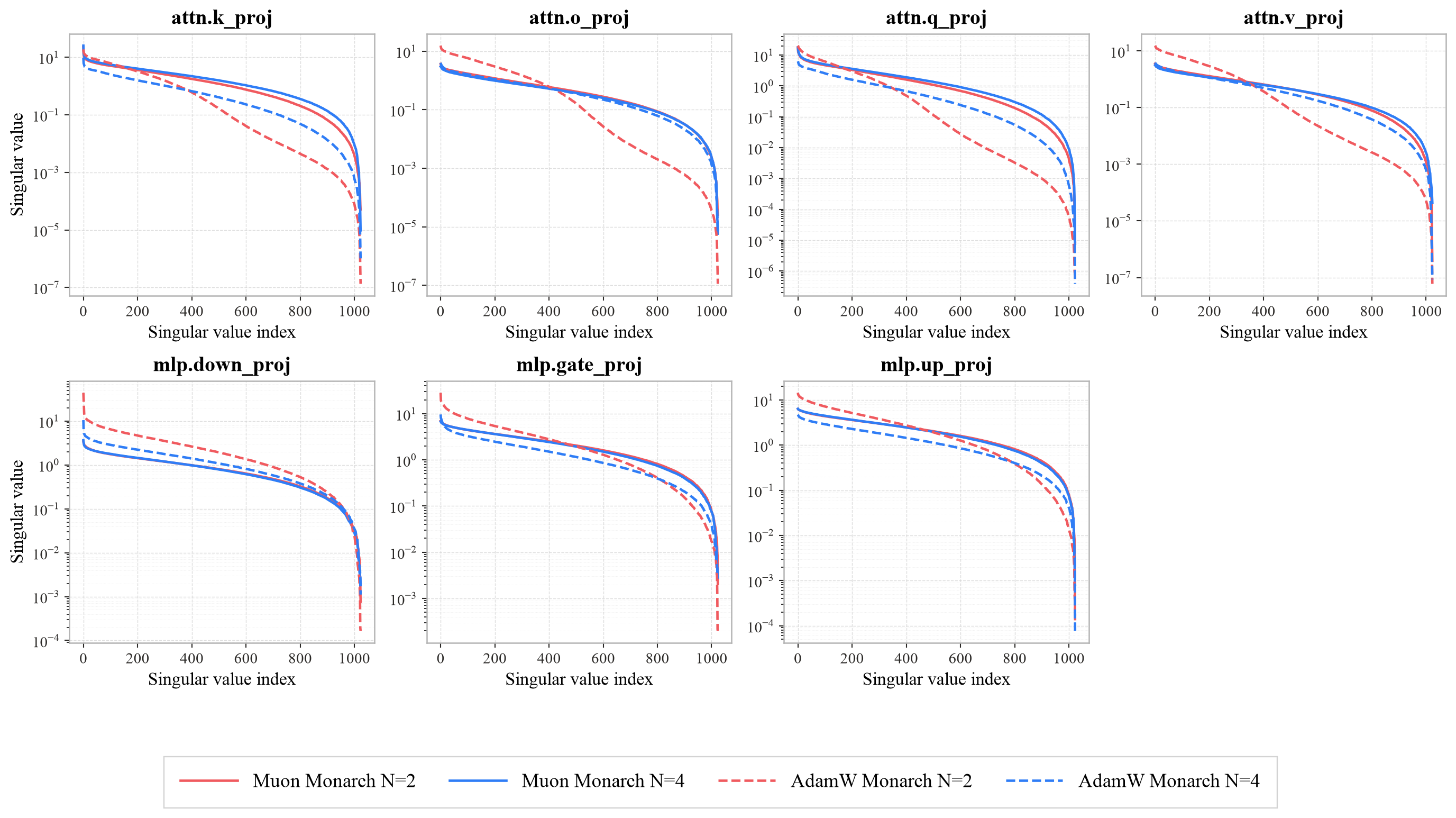}
\caption{Layer-wise singular-value spectra for layers 0 and 4 at the end of training. These plots extend Figure~\ref{fig:monarch-gate-up-spectra} from the compact gate/up illustration to all attention and MLP projections.}
\label{fig:monarch-spectra-layers-a}
\end{figure}
\begin{figure}[!p]
\centering
\includegraphics[width=0.9\textwidth]{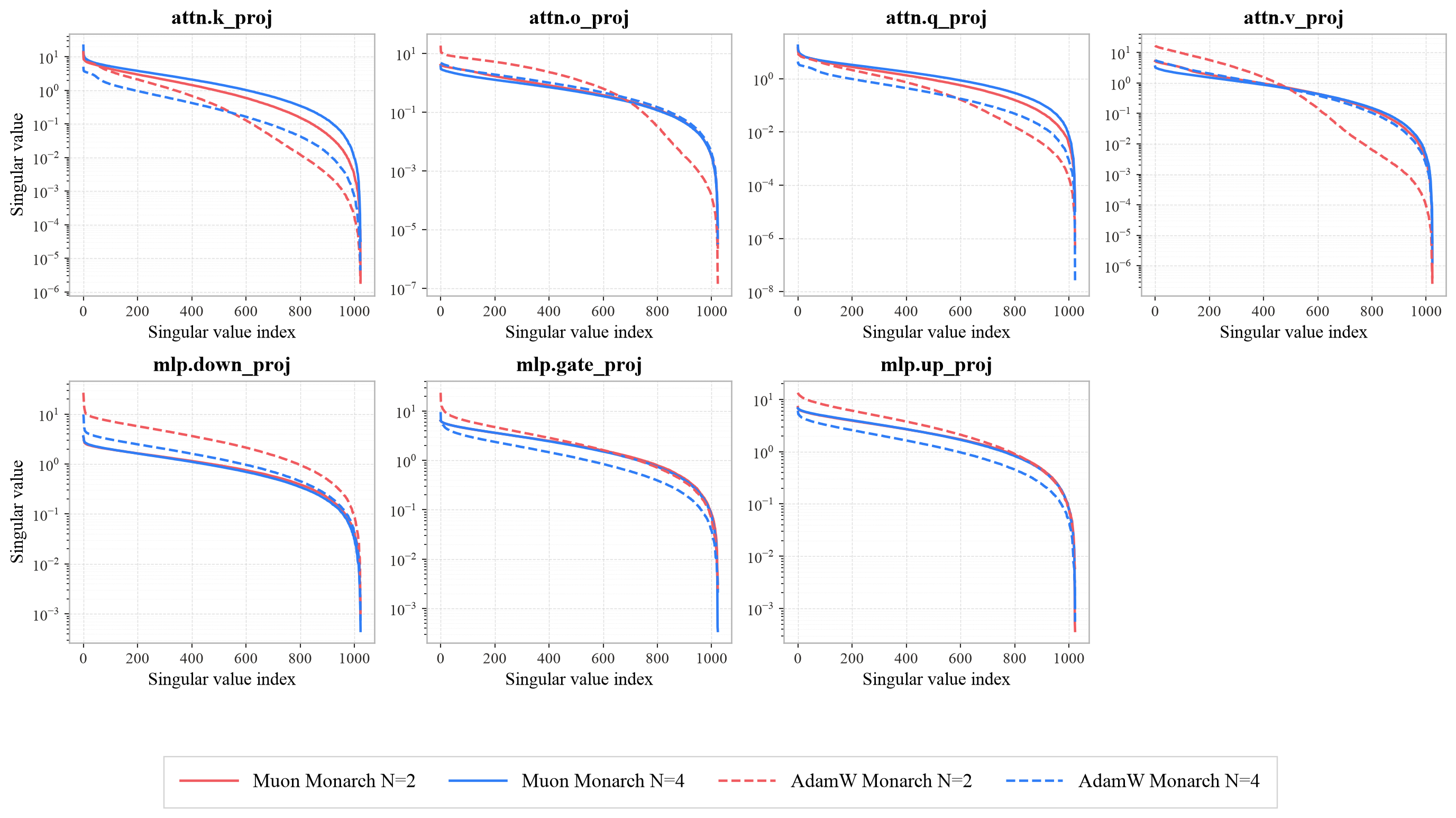}
\par\vspace{0.5em}\par
\includegraphics[width=0.9\textwidth]{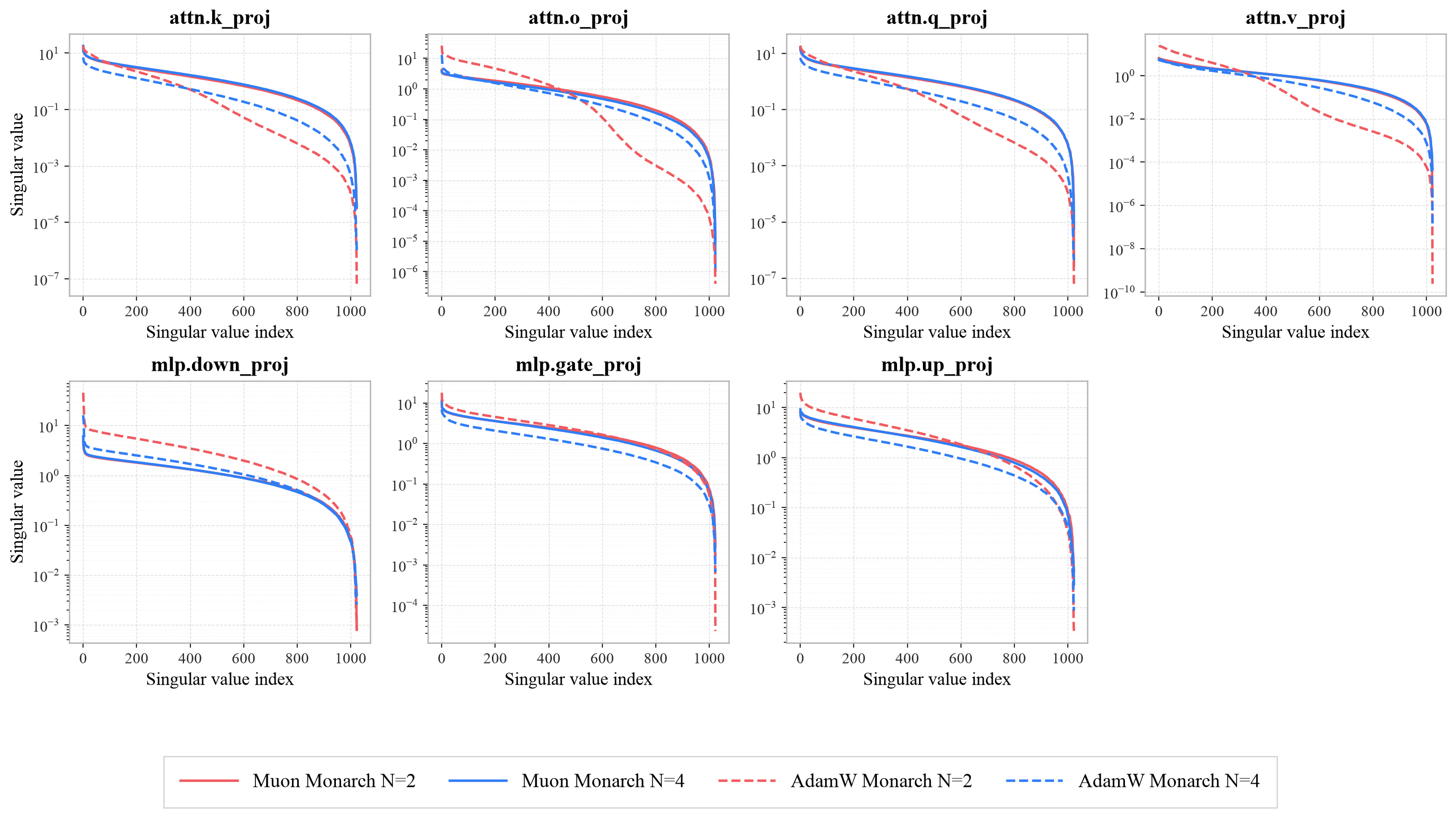}
\caption{Layer-wise singular-value spectra for layers 8 and 11 at the end of training. The same comparison shows that Monarch--Muon generally preserves a heavier spectral tail than Monarch--AdamW at fixed Monarch structure.}
\label{fig:monarch-spectra-layers-b}
\end{figure}

\clearpage
\section{Theoretical Details and Proofs}
\label{app:theory-proofs}

This appendix first establishes the block geometry, then restates the Gluon assumptions and convergence result used in our analysis. We subsequently give the complete Monarch--Muon corollary and verify its relation to the source statements.

\subsection{Block geometry and full-rank expressivity}
\label{app:blockwise-muon-step}

For the canonical square parameterization $\widehat W=PLP^\top R$, where $P$ is a permutation matrix,
$$
\det(\widehat W)=\det(PLP^\top R)
=\prod_{i=1}^m\det(L_i)\det(R_i).
$$
Consequently, nonsingular factor blocks give a nonsingular represented weight. With $m$ equal-size $d/m\times d/m$ blocks in each factor, the parameter count is $2m(d/m)^2=2d^2/m$; this is a restriction on connectivity, not an imposed rank bound.

The following lemma identifies the independent block problems solved by the exact-polar version of Algorithm~\ref{alg:monarch-muon}.

\begin{lemma}[Blockwise decomposition of the Muon subproblem]
\label{lem:blockwise-muon-step}
Let
$$
\boldsymbol M=\bdiag(M_1,\ldots,M_K),
\qquad
\boldsymbol\Delta=\bdiag(\Delta_1,\ldots,\Delta_K),
$$
and let \(\eta>0\). Then:
$$
\boxed{
\begin{aligned}
\min_{\|\boldsymbol\Delta\|_{\mathrm{op}}\leq\eta}
\langle\boldsymbol M,\boldsymbol\Delta\rangle_F
&=
\sum_{j=1}^K
\min_{\|\Delta_j\|_{\mathrm{op}}\leq\eta}
\langle M_j,\Delta_j\rangle_F
\end{aligned}
}
$$
\end{lemma}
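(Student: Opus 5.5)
The plan has three steps: reduce the constraint on $\boldsymbol\Delta$ to one constraint per block, split the objective as a sum, and then minimize each block separately. All three steps need to be handled carefully.

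\textbf{Step 1 (the constraint splits).} Read the minimization on the left as ranging over block-diagonal matrices $\boldsymbol\Delta=\bdiag(\Delta_1,\ldots,\Delta_K)$ with blocks compatible with the $M_j$. For such matrices, $\boldsymbol\Delta^\top\boldsymbol\Delta=\bdiag(\Delta_1^\top\Delta_1,\ldots,\Delta_K^\top\Delta_K)$. The multiset of singular values of $\boldsymbol\Delta$ is therefore the union of the singular values of the blocks, which gives $\|\boldsymbol\Delta\|_{\op}=\max_j\|\Delta_j\|_{\op}$. This identity can also be checked directly. For a unit vector $x=(x_1,\ldots,x_K)$,
$$\|\boldsymbol\Delta x\|^2=\sum_j\|\Delta_jx_j\|^2\le\max_j\|\Delta_j\|_{\op}^2\sum_j\|x_j\|^2,$$
and equality is attained by concentrating $x$ on a maximizing block. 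Consequently the feasible set $\{\|\boldsymbol\Delta\|_{\op}\le\eta\}$ is exactly the Cartesian product $\prod_j\{\|\Delta_j\|_{\op}\le\eta\}$. This is the same block norm $\|\cdot\|_{\mathrm b}$ defined in \eqref{eq:lmo-block-geometry}.

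\textbf{Step 2 (the objective splits).} The off-diagonal blocks of both matrices vanish, so the Frobenius inner product separates as $\langle\boldsymbol M,\boldsymbol\Delta\rangle_F=\sum_j\langle M_j,\Delta_j\rangle_F$. We are then minimizing a sum of functions, each depending on a single variable $\Delta_j$, over a product set. Such a minimum equals the sum of the separate minima. For "$\ge$", every feasible tuple has each term bounded below by its own block minimum. For "$\le$", concatenating blockwise minimizers gives a feasible tuple that attains the sum. Each block minimum exists because the spectral ball is compact and the objective is linear.

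\textbf{Step 3 (closed form, completing \eqref{eq:blockwise-muon-decomposition}).} Use von Neumann's trace inequality, i.e.\ the duality between the operator and nuclear norms: $\langle M_j,\Delta_j\rangle_F\ge-\|M_j\|_*\|\Delta_j\|_{\op}\ge-\eta\|M_j\|_*$. The bound is attained at $\Delta_j=-\eta\,\mathrm{Polar}(M_j)=-\eta UV^\top$, since $\langle U\Sigma V^\top,UV^\top\rangle_F=\operatorname{tr}\Sigma$. This identifies each block minimum and the exact-polar minimizer.

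The only step needing care is Step 1, specifically the claim that the operator-norm constraint on the assembled matrix is equivalent to the per-block constraints, and not merely implied by them. The two-sided norm identity above settles this. I would also make explicit that $\boldsymbol\Delta$ is restricted to block-diagonal form, since otherwise the left-hand side would optimize over a larger set. Even in that case the value coincides, because $\langle\boldsymbol M,\cdot\rangle$ ignores off-diagonal blocks, and zeroing the off-diagonal blocks of a matrix does not increase its operator norm: pinching by the block projections $\sum_jP_j\boldsymbol\Delta P_j$ is a contraction. This remark covers either reading of the statement.
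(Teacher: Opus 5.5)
Your proof is correct and follows essentially the same route as the paper: you show $\|\boldsymbol\Delta\|_{\mathrm{op}}=\max_j\|\Delta_j\|_{\mathrm{op}}$, so the feasible set is a product of per-block spectral balls, and then you split the Frobenius pairing across blocks. Your Step~3 (the closed-form value $-\eta\|M_j\|_*$ via operator--nuclear duality) and your pinching remark for non-block-diagonal $\boldsymbol\Delta$ go beyond what the lemma asserts, and both are also correct.
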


\begin{proof}
By the definition of the operator norm:
$$
\begin{aligned}
\|\boldsymbol\Delta\|_{\mathrm{op}}^2
&=
\sup_{\|x\|_2=1}\|\boldsymbol\Delta x\|_2^2
=
\sup_{\sum_{j=1}^K\|x_j\|_2^2=1}
\sum_{j=1}^K\|\Delta_jx_j\|_2^2\\
&=
\sup_{\substack{a_j\geq0,\ \sum_j a_j=1\\
                 \|u_j\|_2=1}}
\sum_{j=1}^K a_j\|\Delta_ju_j\|_2^2\\
&=
\max_{1\leq j\leq K}
\sup_{\|u_j\|_2=1}\|\Delta_ju_j\|_2^2
=
\max_{1\leq j\leq K}\|\Delta_j\|_{\mathrm{op}}^2,
\end{aligned}
$$
where \(a_j=\|x_j\|_2^2\) and \(x_j=\sqrt{a_j}\,u_j\). Therefore:
$$
\|\boldsymbol\Delta\|_{\mathrm{op}}
=
\max_j\|\Delta_j\|_{\mathrm{op}},
$$
and hence
$$
\|\boldsymbol\Delta\|_{\mathrm{op}}\leq\eta
\quad\Longleftrightarrow\quad
\|\Delta_j\|_{\mathrm{op}}\leq\eta
\quad\text{for every }j.
$$

The Frobenius inner product also decomposes across the diagonal blocks:
$$
\langle\boldsymbol M,\boldsymbol\Delta\rangle_F
=
\sum_{j=1}^K\langle M_j,\Delta_j\rangle_F.
$$
Thus, both the constraint and the objective separate across blocks, giving
$$
\min_{\|\boldsymbol\Delta\|_{\mathrm{op}}\leq\eta}
\langle\boldsymbol M,\boldsymbol\Delta\rangle_F
=
\sum_{j=1}^K
\min_{\|\Delta_j\|_{\mathrm{op}}\leq\eta}
\langle M_j,\Delta_j\rangle_F.
$$

\end{proof}

\subsection{Assumptions and convergence result from Gluon}
\label{app:gluon-source}

We restate the relevant statements from version~1 of \citet{riabinin2025gluon}, with notation adjusted to distinguish the source parameter groups from our factor blocks. In this subsection, $Z=(Z_1,\ldots,Z_p)\in\mathcal S_1\times\cdots\times\mathcal S_p$, $h$ is the objective, and $\|\cdot\|_{(i)}$ is a norm on group $i$, with dual $\|\cdot\|_{(i),*}$. The source variance constant is denoted by $\sigma_{\mathrm G}$ here. The names in the brackets identify the original statement numbers.

\begin{assumption}[Gluon Assumption~1: generalized smoothness]
\label{ass:gluon-smoothness}
For each group $i$, there are constants $L_i^0,L_i^1\ge0$ such that, for every pair $Z,Y\in\mathcal S_1\times\cdots\times\mathcal S_p$,
\begin{equation}
\label{eq:gluon-source-smoothness}
\|\nabla_i h(Z)-\nabla_i h(Y)\|_{(i),*}
\le
\bigl(L_i^0+L_i^1\|\nabla_i h(Z)\|_{(i),*}\bigr)
\|Z_i-Y_i\|_{(i)},\qquad i=1,\ldots,p.
\end{equation}
This is the global layer-wise condition of \citet[Assumption~1, p.~5]{riabinin2025gluon}.
\end{assumption}

\begin{assumption}[Gluon Assumption~2: unbiasedness and bounded variance]
\label{ass:gluon-noise}
For every fixed $Z$, the stochastic oracle $\nabla h_\xi(Z)$ satisfies
\begin{equation}
\label{eq:gluon-source-noise}
\mathbb E_\xi[\nabla h_\xi(Z)]=\nabla h(Z),\qquad
\mathbb E_\xi\!\left[
\|\nabla_i h_\xi(Z)-\nabla_i h(Z)\|_{(i),*}^{2}\right]
\le\sigma_{\mathrm G}^{2},\qquad i=1,\ldots,p,
\end{equation}
for a finite constant $\sigma_{\mathrm G}\ge0$. This is the oracle condition in \citet[Assumption~2, p.~9]{riabinin2025gluon}; the variance is measured in the dual norm of each source parameter group.
\end{assumption}

\begin{theorem}[Gluon Theorem~8: one-group convergence rate]
\label{thm:gluon-source}
Take $p=1$ in Assumptions~\ref{ass:gluon-smoothness} and~\ref{ass:gluon-noise}. Let $h$ be continuously differentiable and bounded below, and write $\|\cdot\|$ and $\|\cdot\|_*$ for the group norm and its dual. Given fresh stochastic gradients $g^k=\nabla h_{\xi^k}(z^k)$, initialize $V^0=g^0$ and use
$$
\begin{aligned}
V^k&=\beta_k V^{k-1}+(1-\beta_k)g^k &&(k\ge1),\\
d^k&\in\operatorname*{argmin}_{\|d\|\le1}\langle V^k,d\rangle
    &&(k\ge0),\\
z^{k+1}&=z^k+t_1^k d^k &&(k\ge0),
\end{aligned}
$$
where
$$
\beta_k=1-(k+1)^{-1/2},\qquad
 t_1^k=t_1(k+1)^{-3/4},\qquad
 t_1=
\begin{cases}
\text{any fixed positive value},&L_1^1=0,\\
(12L_1^1)^{-1},&L_1^1>0.
\end{cases}
$$
Then, for every $T\ge1$,
\begin{equation}
\label{eq:gluon-source-rate}
\min_{0\le k<T}\mathbb E\|\nabla h(z^k)\|_*
=O\!\left(\frac{1+\log T}{T^{1/4}}\right).
\end{equation}
The hidden constant is independent of $T$ and may depend on the initial objective gap, $L_1^0,L_1^1,t_1,\sigma_{\mathrm G}$, and the fixed normed space.
\end{theorem}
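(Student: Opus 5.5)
The plan is the standard Lyapunov-type analysis for normalized/LMO methods with momentum, carried out in the fixed normed space $(\mathcal S_1,\|\cdot\|)$ with dual $\|\cdot\|_*$. Write $G_k=\|\nabla h(z^k)\|_*$ and $e^k=V^k-\nabla h(z^k)$, and take $t_k:=t_1^k$ throughout.

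\textbf{Step 1: one-step descent.} Integrate the gradient along the segment $z^k+s t_k d^k$, $s\in[0,1]$, and apply Assumption~\ref{ass:gluon-smoothness} with base point $Z=z^k$. Since $\|d^k\|\le1$, this gives
$$
h(z^{k+1})\le h(z^k)+t_k\langle\nabla h(z^k),d^k\rangle+\tfrac{t_k^2}{2}\bigl(L_1^0+L_1^1G_k\bigr).
$$
By the LMO property, $\langle V^k,d^k\rangle=-\|V^k\|_*$. Splitting $\nabla h(z^k)=V^k-e^k$ then yields $\langle\nabla h(z^k),d^k\rangle\le -G_k+2\|e^k\|_*$. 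Because $t_k\le t_1\le(12L_1^1)^{-1}$, the quadratic $L_1^1$ term is absorbed and the coefficient of $G_k$ stays at least $t_k/2$. The result is
$$
\tfrac{t_k}{2}G_k\le h(z^k)-h(z^{k+1})+2t_k\|e^k\|_*+\tfrac{L_1^0}{2}t_k^2 .
$$

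\textbf{Step 2: momentum error recursion.} From the update rule,
$$
e^k=\beta_k e^{k-1}+\beta_k\bigl(\nabla h(z^{k-1})-\nabla h(z^k)\bigr)+(1-\beta_k)\bigl(g^k-\nabla h(z^k)\bigr).
$$
By generalized smoothness the middle term is bounded by $(L_1^0+L_1^1G_{k-1})\,t_{k-1}$; I will apply the assumption with base point $z^{k-1}$, which is allowed because it holds for every pair. Unrolling gives a deterministic drift part
$$
\sum_{j<k}\Bigl(\textstyle\prod_{i=j+1}^k\beta_i\Bigr)t_j\bigl(L_1^0+L_1^1G_j\bigr)
$$
together with a weighted martingale sum of the noise terms $g^i-\nabla h(z^i)$. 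The initial term uses $V^0=g^0$, so $e^0$ is pure noise.

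To bound the martingale part in expectation, I pass to the Euclidean (Frobenius) norm, where martingale differences are orthogonal, and pay norm-equivalence constants of the fixed finite-dimensional space. This is exactly why the hidden constant may depend on the space. The expected noise contribution is then $O\bigl(\sigma_{\mathrm G}\sqrt{1-\beta_k}\bigr)=O\bigl(\sigma_{\mathrm G}(k+1)^{-1/4}\bigr)$, using the elementary estimate $\sum_j(1-\beta_j)^2\prod_{i>j}\beta_i^2\lesssim 1-\beta_k$ for $\beta_k=1-(k+1)^{-1/2}$. For the $L_1^0$ drift I use $\sum_{j<k}\prod_{i>j}\beta_i\, t_j\lesssim t_k/(1-\beta_k)\asymp (k+1)^{-1/4}$.

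\textbf{Step 3 (main obstacle): absorbing the $L_1^1G_j$ feedback.} Summing Step~1 over $k<T$ and substituting Step~2 produces the double sum
$$
2L_1^1\sum_k t_k\sum_{j<k}\Bigl(\textstyle\prod_{i=j+1}^{k}\beta_i\Bigr)t_jG_j .
$$
I will swap the order of summation and show that $\sum_{k>j}t_k\prod_{i=j+1}^{k}\beta_i\le c\,t_j/(1-\beta_j)\cdot(1-\beta_j)$, i.e.\ at most a constant times $t_j$. This holds because $t$ decays slower than the momentum window $1/(1-\beta_j)$ grows only by a bounded factor. The swapped sum is then at most $c\,L_1^1 t_1\sum_j t_jG_j$. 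With $t_1=(12L_1^1)^{-1}$ this is at most $\tfrac14\sum_j t_jG_j$ and can be moved to the left-hand side. Checking the explicit constant $c$ is the delicate calculation, and it is where the factor $12$ must be verified.

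\textbf{Step 4: conclusion.} After absorption, and taking expectations,
$$
\sum_{k<T}t_k\,\mathbb E G_k\lesssim h(z^0)-\inf h+\sum_k t_k^2+\sum_k t_k(k+1)^{-1/4}.
$$
The right-hand side is $O(1+\log T)$, since $t_k(k+1)^{-1/4}\asymp (k+1)^{-1}$ and $\sum_k t_k^2$ converges. Dividing by $\sum_{k<T}t_k\asymp T^{1/4}$ and bounding the minimum by the weighted average gives~\eqref{eq:gluon-source-rate}. When $L_1^1=0$, Step~3 is vacuous and any $t_1>0$ works.
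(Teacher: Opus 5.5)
The paper does not prove this statement itself. It imports it from Gluon (Appendix~D.2, Theorem~8) and separately checks only the Euclidean/Frobenius stochastic-sum estimate that the source argument needs. Your outline is that standard argument. Steps~1, 2 and~4 are sound, including the norm-equivalence treatment of the martingale sum (the same mechanism as the paper's Frobenius estimate) and the $L_1^0$-drift bound $\lesssim t_k/(1-\beta_k)$.

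There is one false estimate, in Step~3, exactly at the point you left unchecked. The claim $S_j:=\sum_{k>j}t_k\prod_{i=j+1}^{k}\beta_i\le c\,t_j$ cannot hold. Over the momentum window $j<k\le j+\sqrt{j}$ one has $t_k\approx t_j$, and the products $\prod\beta_i$ stay bounded below by a constant. Hence $S_j\asymp t_j/(1-\beta_j)=t_1(j+1)^{-1/4}$, and $S_j/t_j$ grows like $\sqrt{j+1}$; this is the same quantity as in your own $L_1^0$-drift estimate.

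What the absorption needs, and what your next line actually uses, is a bound by $t_1$ that is uniform in $j$. It holds because the schedules are matched. With $\alpha_k=1-\beta_k=(k+1)^{-1/2}$ one has $t_k=t_1(k+1)^{-1/4}\alpha_k$. Dropping the last factor $\beta_k\le1$ gives, for $k>j$, $t_k\prod_{i=j+1}^k\beta_i\le t_1(j+2)^{-1/4}\alpha_k\prod_{i=j+1}^{k-1}(1-\alpha_i)$, hence
\[
S_j\le t_1(j+2)^{-1/4}\sum_{k>j}\alpha_k\prod_{i=j+1}^{k-1}(1-\alpha_i)\le t_1 .
\]
The last sum telescopes to $1-\prod_{i>j}(1-\alpha_i)\le1$.

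The feedback term is then at most $2L_1^1t_1\sum_jt_jG_j=\tfrac16\sum_jt_jG_j$ for $t_1=(12L_1^1)^{-1}$. The $\tfrac12\sum_kt_kG_k$ on the left absorbs it, leaving $\tfrac13\sum_kt_k\mathbb E G_k$, so the factor $12$ works with room to spare. Subtracting in expectation is legitimate because each $G_j$ is deterministically bounded: $z^0$ is deterministic, $\|z^j-z^0\|\le\sum_{i<j}t_i$, and you can apply smoothness with base point $z^0$. With this replacement the rest of your proof goes through.
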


Theorem~\ref{thm:gluon-source} is the one-group, asymptotic form of \citet[Appendix~D.2, Theorem~8, p.~33]{riabinin2025gluon}, which makes Theorem~2 of that paper explicit. Its two cases correspond to Parts~1 and~2 of the source result. We retain the convergence order rather than the numerical coefficients of the published bounds. The stochastic-sum estimate needed for our factor-space application is verified directly below using the Frobenius variance assumption.

\subsection{Complete statement for Monarch--Muon}
\label{app:monarch-convergence-statement}

Let $\theta=(X_1,\ldots,X_K)$, with $X_j\in\mathbb R^{p_j\times q_j}$, and $F(\theta)=\mathbb E_\xi f(P_1LP_2R;\xi)$. We use the norm, dual norm, and pairing in \eqref{eq:lmo-block-geometry}. For a compact singular value decomposition $A=U\Sigma V^\top$ containing only positive singular values, define $\operatorname{Polar}(A)=UV^\top$ and $\operatorname{Polar}(0)=0$.

The exact-polar version of Algorithm~\ref{alg:monarch-muon} initializes $M_j^{-1}=0$ and, for $k\ge0$ and $j=1,\ldots,K$, performs
\begin{equation}
\label{eq:lmo-block-update}
\begin{aligned}
M_j^k&=\beta_kM_j^{k-1}+(1-\beta_k)G_j^k,\\
X_j^{k+1}&=X_j^k-\eta_k\operatorname{Polar}(M_j^k).
\end{aligned}
\end{equation}
All gradients are sampled at the same tuple $\theta^k$, and each block of $L$ and $R$ has a separate momentum buffer.

\begin{corollary}[Exact-polar convergence of Monarch--Muon]
\label{thm:lmo-convergence}
Let $F$ be continuously differentiable and bounded below by $F_{\inf}$, and let $\theta^0$ be deterministic. Suppose the following conditions hold.

\smallskip
\noindent\textbf{Generalized smoothness.}
For some $\mathcal L_0,\mathcal L_1\ge0$ and all $\theta,\theta'$,
\begin{equation}
\label{eq:lmo-smoothness}
\|\nabla F(\theta)-\nabla F(\theta')\|_{\mathrm b,*}
\le\bigl(\mathcal L_0+\mathcal L_1
\|\nabla F(\theta)\|_{\mathrm b,*}\bigr)
\|\theta-\theta'\|_{\mathrm b}.
\end{equation}

\noindent\textbf{Conditional unbiasedness and Frobenius variance.}
Let $\mathcal F_k$ be the history before sampling the gradient at $\theta^k$. For a finite $\sigma\ge0$,
\begin{equation}
\label{eq:lmo-noise}
\mathbb E[G^k\mid\mathcal F_k]=\nabla F(\theta^k),\qquad
\mathbb E\!\left[\sum_{j=1}^K
\|G_j^k-\nabla_{X_j}F(\theta^k)\|_F^2
\,\middle|\,\mathcal F_k\right]\le\sigma^2.
\end{equation}

\noindent\textbf{Momentum and step sizes.}
Run \eqref{eq:lmo-block-update} with
\begin{equation}
\label{eq:lmo-schedules}
\begin{aligned}
\beta_k&=1-(k+1)^{-1/2},\quad \eta_k=\eta_0(k+1)^{-3/4},\\
\eta_0&=
\begin{cases}
\text{any fixed positive value},&\mathcal L_1=0,\\
(12\mathcal L_1)^{-1},&\mathcal L_1>0.
\end{cases}
\end{aligned}
\end{equation}
Then the conclusion of Theorem~\ref{thm:monarch-muon-main}, equivalently the bound~\eqref{eq:lmo-convergence-bound}, holds for every $T\ge1$. Its implicit constant may depend on $F(\theta^0)-F_{\inf}$, $\mathcal L_0,\mathcal L_1,\eta_0,\sigma$, and block dimensions, but not on $T$. Hence some iterate among the first $T$ has expected factor-gradient norm at most $\varepsilon$ within $\widetilde O(\varepsilon^{-4})$ iterations; the tilde suppresses logarithmic factors.
\end{corollary}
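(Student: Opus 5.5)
The corollary follows by applying Theorem~\ref{thm:gluon-source} with $p=1$. The single normed space is the factor space $\mathcal S=\prod_j\mathbb R^{p_j\times q_j}$, equipped with the block norm $\|\cdot\|_{\mathrm b}$ of \eqref{eq:lmo-block-geometry}. The objective is $h=F$. This choice avoids any appeal to the multi-group version of Gluon.

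\emph{Step 1: duality.} We check that $\|\cdot\|_{\mathrm b,*}=\sum_j\|\cdot\|_*$ is the dual of $\|\cdot\|_{\mathrm b}$ under the pairing $\langle G,H\rangle=\sum_j\langle G_j,H_j\rangle_F$. This is the standard fact that the dual of an $\ell_\infty$-combination of norms is the $\ell_1$-combination of their duals, together with the duality between the spectral and nuclear norms.

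\emph{Step 2: the update is the LMO step.} By \eqref{eq:blockwise-muon-decomposition} and Lemma~\ref{lem:blockwise-muon-step}, the direction $d^k=(-\operatorname{Polar}(M_j^k))_j$ minimizes $\langle M^k,d\rangle$ over $\|d\|_{\mathrm b}\le1$. For $M_j^k=0$ any direction is optimal, so choosing $\operatorname{Polar}(0)=0$ is admissible. Hence \eqref{eq:lmo-block-update} coincides with the Gluon iteration, with $t_1^k=\eta_k$ and $t_1=\eta_0$. Assumption~\ref{ass:gluon-smoothness} with $p=1$ is literally \eqref{eq:lmo-smoothness}, with $L_1^0=\mathcal L_0$ and $L_1^1=\mathcal L_1$, and the schedules match exactly.

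\emph{Step 3: reconcile the momentum initialization.} Gluon sets $V^0=g^0$, whereas we set $M^{-1}=0$. But $\beta_0=1-1=0$, so $M^0=G^0$ and the two recursions agree.

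\emph{Step 4: noise.} This is the main obstacle. Gluon requires a dual-norm variance bound for a fixed $Z$, while we are given a conditional Frobenius variance bound along the adapted trajectory. For the norm comparison, Cauchy--Schwarz on the singular values gives $\|A\|_*\le\sqrt{\operatorname{rank}A}\,\|A\|_F\le\sqrt{r_j}\|A\|_F$ with $r_j=\min(p_j,q_j)$. Applying Cauchy--Schwarz again over the blocks,
\[
\Bigl(\sum_j\|E_j\|_*\Bigr)^2\le\Bigl(\sum_j r_j\Bigr)\sum_j\|E_j\|_F^2 .
\]
So the dual-norm variance is bounded by $\sigma_{\mathrm G}^2=(\sum_j r_j)\sigma^2$, which introduces the dependence on the block dimensions.

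For the conditioning, I would not rely on the fixed-$Z$ formulation. Instead I would re-verify the single place where Gluon's proof uses the noise: the bound on $\mathbb E\|\sum_i c_i(G^i-\nabla F(\theta^i))\|_{\mathrm b,*}$ for the momentum error, with weights $c_i=(1-\beta_i)\prod_{l>i}\beta_l$. I would bound the dual norm by $\sqrt{\sum_j r_j}$ times the Frobenius norm and apply Jensen. In the Frobenius (Hilbert) space, the noise terms form a martingale difference sequence with respect to $\mathcal F_k$, so the cross terms vanish by conditional unbiasedness. This gives the estimate $\le\sqrt{\sum_j r_j}\,\sigma\sqrt{\sum_i c_i^2}$, which is exactly the form Gluon's argument consumes. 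The rest of Gluon's proof — the descent inequality from \eqref{eq:lmo-smoothness}, the telescoping, and the choice $\eta_0\le(12\mathcal L_1)^{-1}$ to absorb the $\mathcal L_1$ term — transfers verbatim. The result is \eqref{eq:lmo-convergence-bound} with a constant depending on $F(\theta^0)-F_{\inf}$, $\mathcal L_0$, $\mathcal L_1$, $\eta_0$, $\sigma$, and $\sum_j r_j$.

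Finally, setting $(1+\log T)T^{-1/4}\lesssim\varepsilon$ gives $T=\widetilde O(\varepsilon^{-4})$.
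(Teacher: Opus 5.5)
Your proposal is correct and follows the paper's proof essentially step for step. It treats the factor tuple as a single Gluon group under $\|\cdot\|_{\mathrm b}$, and uses Lemma~\ref{lem:blockwise-muon-step} to identify the LMO with the blockwise polar update, including zero blocks. It reconciles the initialization via $\beta_0=0$ and matches the smoothness condition literally. It obtains $\sigma_{\mathrm G}=\sqrt{r}\,\sigma$ from the nuclear--Frobenius and Cauchy--Schwarz bounds, and re-verifies the weighted stochastic-sum estimate as a martingale bound in the Frobenius space. Your explicit check of the $\ell_\infty$/$\ell_1$ norm duality is only a minor addition that the paper takes for granted in \eqref{eq:lmo-block-geometry}.
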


The smoothness condition is an assumption on the coupled factor objective $F(L,R)$; it is not implied by the Monarch parameterization alone. The conclusion concerns the minimum of expected gradient norms, rather than the last iterate, and uses exact polar updates with the stated momentum and step sizes.

\subsection{Proof of the Monarch--Muon convergence guarantee}
\label{app:convergence-proofs}

\begin{proof}[Proof of Theorem~\ref{thm:monarch-muon-main} and Corollary~\ref{thm:lmo-convergence}]
\textbf{1. One source group, many factor blocks.} We apply Theorem~\ref{thm:gluon-source} to the vectorized factor tuple as a \emph{single} parameter group, equipped with $\|\cdot\|_{\mathrm b}$. Thus $p=1$ in the Gluon notation, even though there are $K$ trainable factor blocks. This preserves the dependence of each factor gradient on all the factors. With $r=\sum_j\min(p_j,q_j)$, the correspondence is
\begin{center}
\begin{tabular}{ll}
\toprule
\textbf{Gluon quantity} & \textbf{Monarch--Muon specialization}\\
\midrule
Group count $p$ & $1$\\
Parameter $z$ and objective $h$ & $\operatorname{vec}(\theta)$ and $F$\\
Norm $\|\cdot\|$ and dual $\|\cdot\|_*$
 & $\|\cdot\|_{\mathrm b}$ and $\|\cdot\|_{\mathrm b,*}$\\
Smoothness constants $L_1^0,L_1^1$ & $\mathcal L_0,\mathcal L_1$\\
Variance constant $\sigma_{\mathrm G}$ & $\sqrt r\,\sigma$\\
Step-size constant $t_1$ & $\eta_0$\\
\bottomrule
\end{tabular}
\end{center}

\noindent\textbf{2. The LMO gives the block polar updates.}
Lemma~\ref{lem:blockwise-muon-step} and spectral--nuclear duality yield
\begin{equation}
\label{eq:lmo-oracle-identity}
\min_{\|H\|_{\mathrm b}\le1}\langle G,H\rangle
=\sum_{j=1}^K\min_{\|H_j\|_{\op}\le1}\langle G_j,H_j\rangle_F
=-\sum_{j=1}^K\|G_j\|_*.
\end{equation}
A minimizer is $H_j=-\operatorname{Polar}(G_j)$, including rank-deficient and zero blocks. A radius-$\eta_k$ LMO step is therefore exactly the factor update in \eqref{eq:lmo-block-update}. The momentum recursion also agrees with Gluon; in particular, $\beta_0=0$ gives $M^0=G^0$, its required initialization.

\smallskip
\noindent\textbf{3. Verification of the source smoothness assumption.}
With $p=1$, $h=F$, and $\|\cdot\|_{(1)}=\|\cdot\|_{\mathrm b}$, Assumption~\ref{ass:gluon-smoothness} reads
$$
\|\nabla F(\theta)-\nabla F(\theta')\|_{\mathrm b,*}
\le\bigl(L_1^0+L_1^1\|\nabla F(\theta)\|_{\mathrm b,*}\bigr)
\|\theta-\theta'\|_{\mathrm b}.
$$
This is precisely \eqref{eq:lmo-smoothness} after setting $L_1^0=\mathcal L_0$ and $L_1^1=\mathcal L_1$. No separate smoothness assumption on an individual factor is used.

\smallskip
\noindent\textbf{4. Verification of the source variance assumption.}
For a block tuple $E$, put $\|E\|_{\mathrm F,\oplus}^{2}=\sum_j\|E_j\|_F^{2}$. By the nuclear--Frobenius inequality and Cauchy--Schwarz,
$$
\|E\|_{\mathrm b,*}
=\sum_j\|E_j\|_*
\le\sum_j\sqrt{\min(p_j,q_j)}\,\|E_j\|_F
\le\sqrt r\,\|E\|_{\mathrm F,\oplus}.
$$
Together with \eqref{eq:lmo-noise}, this implies
$$
\mathbb E[G^k\mid\mathcal F_k]=\nabla F(\theta^k),\qquad
\mathbb E\!\left[
\|G^k-\nabla F(\theta^k)\|_{\mathrm b,*}^{2}
\,\middle|\,\mathcal F_k\right]\le r\sigma^2.
$$
Thus Assumption~\ref{ass:gluon-noise} holds along the iterates with $\sigma_{\mathrm G}=\sqrt r\,\sigma$.

The Frobenius formulation also supplies the weighted stochastic-sum estimate used in the source convergence argument. For deterministic weights $a_\tau$ and martingale differences $E^\tau=G^\tau-\nabla F(\theta^\tau)$,
\begin{equation}
\label{eq:lmo-stochastic-sum}
\begin{aligned}
\mathbb E\Bigl\|\sum_\tau a_\tau E^\tau\Bigr\|_{\mathrm b,*}
&\le\sqrt r\Bigl(\mathbb E
  \Bigl\|\sum_\tau a_\tau E^\tau\Bigr\|_{\mathrm F,\oplus}^{2}\Bigr)^{1/2}\le\sqrt r\,\sigma\Bigl(\sum_\tau a_\tau^2\Bigr)^{1/2}.
\end{aligned}
\end{equation}
Conditional unbiasedness eliminates cross terms in the Euclidean inner product underlying $\|\cdot\|_{\mathrm F,\oplus}$. This argument does not require independence of the noises in different factor blocks and does not assume a Euclidean variance identity for the block dual norm itself.

\smallskip
\noindent\textbf{5. Transfer of the convergence rate.}
The schedules in \eqref{eq:lmo-schedules} match those in Theorem~\ref{thm:gluon-source} with $t_1=\eta_0$. Part~1 of the source Theorem~8 applies when $\mathcal L_1=0$; Part~2 applies when $\mathcal L_1>0$ and $\eta_0=(12\mathcal L_1)^{-1}$. Using the noise control above, the one-group rate gives
$$
\min_{0\le k<T}\mathbb E\|\nabla F(\theta^k)\|_{\mathrm b,*}
=O\!\left((1+\log T)T^{-1/4}\right).
$$
The dual-norm identity in \eqref{eq:lmo-block-geometry} gives \eqref{eq:lmo-convergence-bound}. Solving this bound for $\varepsilon$ yields the stated $\widetilde O(\varepsilon^{-4})$ iteration complexity.
\end{proof}

\end{appendixpart}
\end{document}